\newtheorem{ther}{Theorem}
\newcommand{\ta}{\widetilde{\mathcal A}}
\begin{document}

\twocolumn[
\icmltitle{Universal scaling laws in the gradient descent training of neural networks}

\begin{icmlauthorlist}
\icmlauthor{Maksim Velikanov}{sk}
\icmlauthor{Dmitry Yarotsky}{sk}
\end{icmlauthorlist}

\icmlaffiliation{sk}{CDISE, Skolkovo Institute of Science and Technology, Moscow, Russia. Emails: maksim.velikanov@skoltech.ru, d.yarotsky@skoltech.ru}

\vskip 0.3in
]

\printAffiliationsAndNotice{}

\begin{abstract}
Current theoretical results on optimization trajectories of neural networks trained by gradient descent typically have the form of rigorous but potentially loose bounds on the loss values. In the present work we take a different approach and show that the learning trajectory can be characterized by an explicit asymptotic at large training times. Specifically, the leading term in the asymptotic expansion of the loss behaves as a power law $L(t) \sim t^{-\xi}$ with exponent $\xi$ expressed only through the data dimension, the smoothness of the activation function, and the class of function being approximated. Our results are based on spectral analysis of the integral operator representing the linearized evolution of a large network trained on the expected loss. Importantly, the techniques we employ do not require specific form of a data distribution, for example Gaussian, thus making our findings sufficiently universal.        
\end{abstract}

\section{Introduction}

A major challenge in the research of neural networks is the quantitative theoretical description of their optimization by gradient descent. At present, many aspects of network training seem to be understood rather well on a qualitative level, or admit convincing heuristic explanations, but we seem to lack tools for making reasonably accurate quantitative predictions, even for relatively simple models and data. In this sense, the theory of neural networks compares unfavorably to physics, which is also an application-driven field but with an apparently much more successful penetration of theoretical methods. The main difficulty here is probably the complex structure of the data and models, which are hard to describe in terms of convenient and simple mathematical abstractions. 

In recent years, a significant progress in the theoretical analysis of gradient descent of neural networks has been associated with the limit of large networks, which can be studied using various methods from partial differential equations \cite{mei2018mean, rotskoff2018trainability}, kernel methods \cite{NEURIPS2018_5a4be1fa,NEURIPS2019_0d1a9651}, spin glass theory \cite{choromanska2015loss}, random matrix theory \cite{pennington2017geometry}, dynamical systems \cite{poole2016exponential}, and other mathematical fields. 

In the present work, we consider a setting of large networks and large, smoothly distributed data sets that allows us to obtain an explicit leading term in the long-time evolution of the loss under gradient descent. We are mainly inspired by the spectral theory of singular integral operators \cite{Birman_1970}, which we apply to the linearized evolution of the network. While this linearized evolution has been widely studied recently, most related research seems to focus on theoretical convergence guarantees and upper bounds for the loss values \cite{pmlr-v97-arora19a,nitanda2021optimal}, or on a highly symmetric problems admitting explicit solution \cite{NEURIPS2019_253f7b5d,yang2019a}. In contrast, we focus on obtaining explicit loss evolution formulas, which we find in the form of power laws 
\begin{equation}\label{eq:loss_asymp1}L(t)\sim Ct^{-\xi}.\end{equation}
We argue that the exponents $\xi$ here exhibit some form of universality, in that they are essentially determined by the input dimension $d$ and by the smoothness classes of the activation function and the target function. In particular, we find that in the case of ReLU networks approximating 
an indicator function of some region in the $d$-dimensional space (a classification problem target), the natural value of the exponent is $\xi=\tfrac{1}{d+1}$. On the other hand, in the case of target functions generated by a 
randomly initialized wide ReLU network, the exponent is $\xi=\tfrac{3}{d+1}.$ Our approach also allows us to obtain explicit expressions for the coefficient $C$ in these cases.

\begin{figure*}
    \centering
    \includegraphics[width=1.0\textwidth,clip,trim= 0 5mm 0 0]{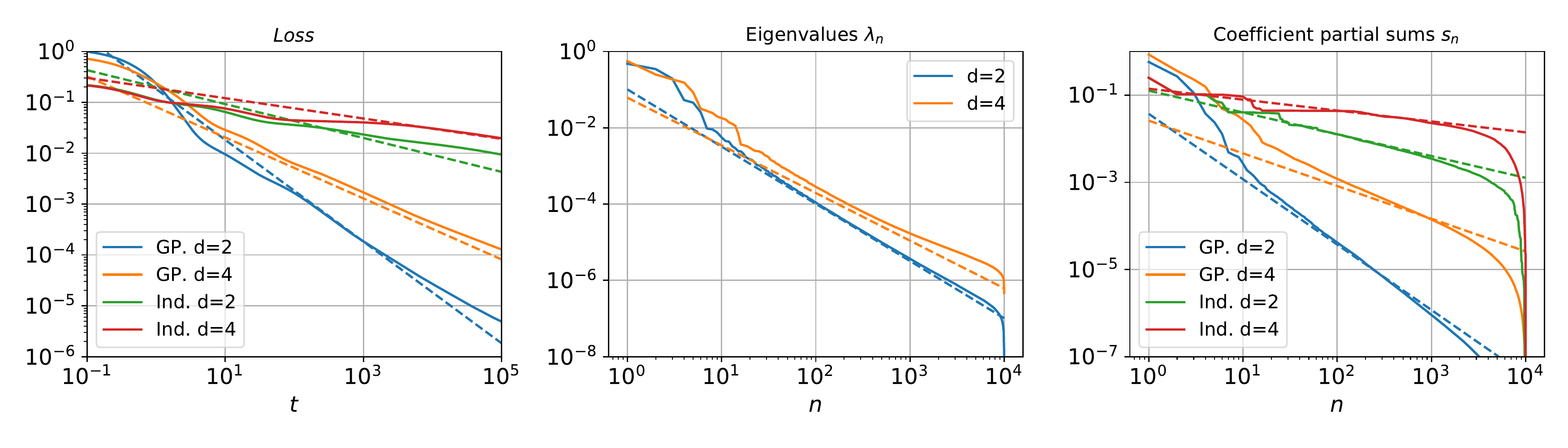}
    \caption{The figure shows the loss trajectory and spectral properties of the neural tangent kernel of a shallow network in the NTK regime.  The target function (corresponding to the initial displacement between the network output and the approximated function) is either generated by a Gaussian process (\emph{GP}) modeled by a larger network of the same architecture, or is an indicator function of a $d$-dimensional ball (\emph{Ind}). The data distributions $\mu$ are modeled as mixtures of 8 Gaussian distributions with random centers, and the data dimension is either $d=2$ or $d=4$. The \textbf{solid} lines show the numerically obtained values, while the \textbf{dashed} lines show the respective theoretical power-law asymptotics. The dataset size is $M=10^4$ (see Section \ref{sec:exp_det} (SM) for further details of experiments). \newline
    \textbf{Left:} Loss evolution for a shallow network with width $N=3000$. The scaling exponent giving the slope of the theoretical asymptotic is $\xi=\tfrac{\beta}{d+\alpha}=\tfrac{3}{d+1}$ for GP and $\xi=\tfrac{1}{d+\alpha}$ for Ind  (see Section \ref{sec:coeff}). \textbf{Center:} Distribution of the infinite network NTK eigenvalues $\lambda_n$. The theoretical scaling exponent is $\nu=1+\tfrac{1}{d}$ (see Section \ref{sec:eigvalues_asym_base}). \textbf{Right:} Distributions of the coefficient partial sums $s_n$ (see Eq.~\eqref{eq:s_partial_sum}). The theoretical scaling exponent is $\kappa=\tfrac{\beta}{d}=\tfrac{3}{d}$ for GP and $\kappa=\tfrac{1}{d}$ for Ind (see Section \ref{sec:coeff}).}
    \label{fig:loss_dyn}
\end{figure*}  

The power law \eqref{eq:loss_asymp1} is established using similar power laws (but with different exponents and coefficients) for the eigenvalues of the evolution operator and for the coefficients in the expansion of the target function over corresponding eigenvectors. 
These power laws are indeed confirmed by our experiments (see Fig.~\ref{fig:loss_dyn}).   

Our main scenario is approximation by shallow ReLU network in the NTK regime, but we also briefly consider several modifications of this scenario, namely the activation functions $(x_+)^q$ with $q>0$, approximation by a deep network in the NTK regime, and approximation in the mean field regime.

\section{Related work}

The approximation of linearized network evolution and its applications  were studied in many works, see in particular \cite{NEURIPS2019_0d1a9651,chizat2019lazy,NEURIPS2020_b7ae8fec,lewkowycz2020large,NEURIPS2020_c6dfc6b7}. The role of the eigenvalues and eigenvectors of the NTK in the linearized network was emphasized in \cite{pmlr-v97-arora19a}, where the GD dynamic of the finite network was linked to the dynamic of its infinite width counterpart, determined by spectral properties of the corresponding NTK. In subsequent works, the NTK spectrum was central for description of network training \cite{nitanda2021optimal} and generalization \cite{canatar2021spectral, bahri2021explaining}. These papers use the assumption of power law NTK spectrum, but justify it empirically or for highly symmetric problems. 

Because of the importance of the NTK spectrum, a number of works focused on its description in different settings. We first mention directions and settings that are different from ours. The case of very deep networks was studied in \cite{pmlr-v119-xiao20b,hayou2021meanfield, pmlr-v97-hayou19a, pmlr-v132-agarwal21b}. This analysis is relying on convergence of the NTK to simple fixed points in the limit of infinite depth. \cite{pmlr-v97-hayou19a} also studied the choice of activation function, in particular its smoothness. Another line of research \cite{pmlr-v119-adlam20a, NEURIPS2020_572201a4} uses techniques from Random Matrix Theory to analyze the setting where the dataset size $M$ goes to infinity together with data dimension $d$ and layer widths $n_l$.

In this work we consider the setting of fixed data dimension and effectively infinite network width and dataset size. In this case \cite{NEURIPS2019_253f7b5d} showed that the network evolution can be described by a deterministic integral operator that is easier to analyze than a large but finite matrix. Also, in that paper and in the papers \cite{yang2019a, cao2020understanding, nitanda2021optimal}, the integral operator was explicitly diagonalized in the special case of uniform distribution on a sphere.

A power law upper bound for the training loss was obtained in  \cite{nitanda2021optimal}, but with an exponent $\xi$ smaller than ours. The paper \cite{bahri2021explaining} describes a different power law, relating the test loss at the end of training to the dataset size and the network width. This result also relies on a power law assumption for the NTK spectrum. 

Another related line of research is the case of univariate functions. The gradient descent evolution of univariate shallow networks has been analytically studied in the papers  \cite{yarotsky2018collective, williams2019gradient}.

\section{Asymptotic evolution of the loss function}
\label{sec:loss_asym}

We consider a linearized training of a neural network by gradient descent. Such linear approximations arise naturally in various ``lazy training'' scenarios \cite{chizat2019lazy}. Consider the standard quadratic loss function
\begin{equation}(\mathbf W)=\tfrac{1}{2}\int_{\mathbb R^d}|\widetilde f(\mathbf W,\mathbf x)-f(\mathbf x)|^2 \mu(\mathbf x)d\mathbf x,\end{equation}
where $\mathbf x$ is the $d$-dimensional input, $f$ is the approximated function, $\widetilde f$ is the network, $\mathbf W$ are the network weights, and $\mu$ is the data distribution on which the network is trained. Gradient descent can be written as the differential equation 
\begin{equation}\tfrac{d}{dt}\mathbf W=-\nabla_{\mathbf W} L(\mathbf W).\end{equation}

We assume that the weight vector $\mathbf W$ is close to a global minimum $\mathbf W_*$ where $L(\mathbf W_*)=0$ and $\widetilde f(\mathbf W_*,\mathbf x)=f(\mathbf x)$ $\mu$-a.e., and that the evolution equation can be linearized at $\mathbf W=\mathbf W_*$. It is convenient to write this linearized equation in terms of the difference $\delta f(\mathbf x)=\widetilde f(\mathbf W,\mathbf x)- f(\mathbf x)$ between the current output and the target. 
The corresponding linear equation is
\begin{equation}\label{eq:dfdta}\tfrac{d}{dt}\delta f=- \mathcal A\delta f,\end{equation}
where $A$ can be written as the integral operator 
$$\mathcal A\delta f(\mathbf x)=\int_{\mathbb R^d}\Theta(\mathbf x,\mathbf x')\mu(\mathbf x')\delta f(\mathbf x')d\mathbf x'$$
with the NTK (neural tangent kernel)
\begin{equation}\label{NTK_def}
    \Theta(\mathbf x,\mathbf x')=\nabla_{\mathbf W}\widetilde f(\mathbf W_*,\mathbf x)^T\nabla_{\mathbf W}\widetilde f(\mathbf W_*,\mathbf x')
\end{equation}
The evolution operator $\mathcal A$ is a symmetric non-negative definite operator with respect to the scalar product $\langle f_1, f_2\rangle_{\mu} = \int f_1(\mathbf x) f_2(\mathbf x) \mu(\mathbf x) d\mathbf x$. By multiplying functions $f$ by $\mu^{1/2}$, the operator $\mathcal A$ can be brought to the form \begin{equation}\label{eq:wta}\widetilde{\mathcal A}=\mu^{1/2}{\mathcal A}\mu^{-1/2}\end{equation} with a symmetric kernel,
\begin{equation}\label{eq:wtadf0}\widetilde{\mathcal A}\delta f(\mathbf x)=\int_{\mathbb R^d}\mu^{1/2}(\mathbf x)\Theta(\mathbf x,\mathbf x')\mu^{1/2}(\mathbf x')\delta f(\mathbf x')d\mathbf x'.\end{equation}
In this form $\widetilde{\mathcal A}$ is symmetric w.r.t. the usual scalar product $\langle f_1, f_2\rangle = \int f_1(\mathbf x) f_2(\mathbf x)  d\mathbf x$. 
Observe that the loss at time $t$ can be written as 
\begin{equation}
    L(t)=\tfrac{1}{2}\|g_t\|^2,  \quad g_t=e^{-t\ta}g,  
\end{equation}
where the norm $\|\cdot\|$ corresponds to the scalar product $\langle\cdot,\cdot\rangle$ and the function $g$ is given by
\begin{equation}\label{eq:g}
g(\mathbf x)=\mu^{1/2}(\mathbf x)(\widetilde f(\mathbf w(t=0),\mathbf x)-f(\mathbf x)).
\end{equation}

We can now describe the evolution of the loss by diagonalizing the operator $\ta$. We will be interested in the scenario where $\mu$ is a smooth function that is compactly supported or falls off at infinity sufficiently fast. (In particular, in the context of a finite training set this means that this set is  large enough to be legitimately approximated by $\mu$.) In this case, for typical kernels $\Theta,$ the operator $\ta$ will have a discrete spectrum with eigenvalues converging to 0. Let $\lambda_n$ denote the eigenvalues of $\ta$ sorted in decreasing order, and let $c_n$ be the respective coefficients in the expansion of the initial error $g$ (given by Eq.~\eqref{eq:g}) over the normalized orthogonal eigenvectors. Then, the evolution of the loss takes the form
\begin{equation}
\label{loss}
    L(t) = \frac{1}{2} \sum\limits_{n=0}^\infty e^{-2 \lambda_n t} |c_n|^2.
\end{equation}
To compute the asymptotic of $L(t)$ at large times $t$, we need to know the distribution of the eigenvalues $\lambda_n$ and the coefficients $c_n$ at large $n$. The key assumption of our work (verified later for certain scenarios) is that these distributions have power law forms. Specifically, regarding the eigenvalues $\lambda_n$ we assume that 
\begin{align}
    \label{lambda asym}
    &\lambda_n \sim \Lambda n^{-\nu} 
\end{align}
with some coefficient $\Lambda$ and exponent $\nu$. 
Regarding the coefficients $c_n$, we assume that they also have a power law distribution on a large length scale in $n$, but possibly deviate from this law locally (e.g., due to oscillations). For this reason, it is convenient to describe their large $n$ behavior by the partial sums
\begin{equation}\label{eq:s_partial_sum}
    s_n \equiv \sum\limits_{k \geq n} |c_k|^2.
\end{equation}
We then assume that 
\begin{align}
    \label{tail asym}
    &s_n \sim K n^{-\kappa}
\end{align}
with some coefficient $K$ and exponent $\kappa.$

Under assumption of the  power laws \eqref{lambda asym} and \eqref{tail asym}, it is easy to check (see SM, Section \ref{sec:genlossformula}) that the loss also has a power law asymptotic \eqref{eq:loss_asymp1} with the constant $C$ and exponent $\xi$ expressible through the constants $\Lambda,K$ and exponents $\nu,\kappa:$
\begin{equation}\label{eq:loss_asymp}
    L(t) \sim \frac{K}{2} \Gamma\left(\frac{\kappa}{\nu} + 1\right) \left( 2 \Lambda  t\right)^{-\frac{\kappa}{\nu}},
\end{equation}
where $\Gamma(z)$ is the Gamma function.

In Fig.~\ref{fig:loss_dyn} we illustrate this approach to the analysis of long-term loss evolution with several examples of target functions having different smoothness and dimension and, as a result, exhibiting different exponents.

In the remainder of the paper we show that the power laws \eqref{lambda asym} and \eqref{tail asym}, and hence the large-$t$ asymptotic \eqref{eq:loss_asymp} of the loss, are indeed valid for some natural network training scenarios. The asymptotic \eqref{lambda asym} of the eigenvalues is primarily determined by the singularities of the kernel $\Theta$. These singularities can be explicitly described for shallow neural network with piecewise smooth activations such as ReLU. Then, the power law \eqref{lambda asym} can be derived from general results on integral operators with singular kernels. In particular, in the case of ReLU we find that $\nu=\tfrac{1}{d}.$ 

The asymptotic \eqref{tail asym} of the coefficients is more subtle, as it depends significantly on the class of the initial error function $g$ (which in turn depends, by Eq.~\eqref{eq:g}, on the target function $f$ and the initial approximation $\widetilde f$). We derive this power law for one natural class of discontinuous functions $g$, and for functions $g$ that are realizations of a Gaussian process of a particular ``roughness''.  

Moreover, for these two classes, we also find an explicit form of the coefficient $C$ appearing in loss asymptotic \eqref{eq:loss_asymp1}. This requires us, however, to modify the above derivation of the loss asymptotic \eqref{eq:loss_asymp} 
by what can be called ``integrated localization''. Roughly speaking, in the large-$t$ limit we can think of the eigenvectors of $\ta$ as infinitesimally localized in $\mathbb R^d$. We then apply the above derivation of Eq.~\eqref{eq:loss_asymp} not to the full set of eigenvectors, but separately to each infinitesimal sub-domain, and then integrate the results. See details in Sections \ref{sec:coeff} and \ref{sec:SM_coeffs} (SM).

The subsequent exposition is structured as follows. In Section \ref{sec:background} we provide the background on the NTK kernel and spectral properties of singular integral operators. Then, in Section \ref{sec:asymp}, we derive the loss asymptotic \eqref{eq:loss_asymp1} for our main setting -- the NTK training with a shallow ReLU network. After that, in Section \ref{sec:extensions} we consider various modifications of this setting: other activation functions (Section \ref{sec:diffsmooth}), deep networks (Section \ref{sec:deep}), and training in the mean field regime (Section \ref{sec:mf}). 

\section{Background}\label{sec:background}

\subsection{Infinitely wide networks}
Lazy training scenarios discussed in section \ref{sec:loss_asym} naturally arise for the networks in the limit of infinite width. However, there are several ways to scale parameters of the network with width, which lead to different operating regimes of infinitely wide networks \cite{pmlr-v119-golikov20a, golikov2020dynamically}. 

\subsubsection{NTK regime}
The first option we consider is the renowned NTK regime \cite{NEURIPS2018_5a4be1fa}, for which the NTK $\Theta$ defined in \eqref{NTK_def} is deterministic and constant during training. This simplification immediately leads to a linear dynamic in the space of network outputs. In this work we focus on feed-forward networks parametrized as
\begin{equation}
    \begin{cases}
    &z^1_j =  \sum\limits_{i=1}^{d} \sigma_w w^1_{ij} x_i+ \sigma_b b^1_j \\
    &z^l_j = \sum\limits_{i=1}^{n_{l-1}} \frac{\sigma_w}{\sqrt{n_{l-1}}} w^l_{ij} x^{l-1}_i+ \sigma_b b^l_j, \quad l>1 \\
    &x^l_j = \phi(z^l_j)
    \end{cases}
\end{equation}
Here $n^l$ is the width of layer $l$, $x_i$ is a network input and $z^L_j$ is the network output. We also consider the last layer without bias term $b^L_j$ and having width $n_L=1$ (scalar output). Trainable parameters $w^l_{ij}, \; b^l_j$ are initialized as i.i.d. normal Gaussians. 

The output of each layer $l$ at initialisation is a Gaussian process with covariance $\langle z^l_j(\mathbf{x})z^l_{j'}(\mathbf{x}')\rangle=\delta_{jj'}\Sigma^{(l)}(\mathbf{x},\mathbf{x}')$. By introducing the NTK's $\Theta^{(l)}(\mathbf{x},\mathbf{x'})$ of intermediate layers,  one can recursively compute \cite{NEURIPS2019_0d1a9651} the both NTK and covariance 
\begin{equation}\label{eq:recur0}
    \begin{cases}
    & \Sigma^{(l+1)} = \sigma_w^2 \big\langle \phi(z^l)\phi(z'^{l}\big\rangle + \sigma_b^2\\
    & 
    \Theta^{(l+1)} = \Sigma^{(l+1)} +\sigma_w^2 \Theta^{(l)}\big\langle \dot{\phi}(z^l)\dot{\phi}(z'^l)\big\rangle
    \end{cases}
\end{equation}
Here $z^l = z^l(\mathbf{x})$ and $z'^l=z^l(\mathbf{x'})$ are draws from the Gaussian process with covariance $\Sigma^{(l)}(\mathbf{x},\mathbf{x}')$, and $\langle\ldots\rangle$ is the averaging.  

To analyze the spectrum of the evolution operator $\ta$ given by \eqref{eq:wtadf0} we will intensively use the explicit form of the NTK and covariance. For the ReLU activation $\phi(z)=(z)_+$ the averages in \eqref{eq:recur0} can be computed analytically \cite{NIPS2009_kernel}. When considering only a pair of points $\mathbf{x},\mathbf{x}'$, it is convenient to parametrize the covariance at these points in some layer $l$ as 
\begin{equation}\label{eq:covariance_param0}
    \Sigma^{(l)}(\mathbf{x},\mathbf{x}') = 
    \begin{pmatrix}
    &r_l^2 & r_l r_l' \cos \varphi_l\\
    &r_l r_l' \cos \varphi_l & r_l'^2
    \end{pmatrix}
\end{equation}
In the case of shallow network ($L=2$) one gets the following explicit expressions for the  NTK and covariance of the network output:
\begin{align}
    & \label{ReLU covariance}
    \Sigma(\mathbf{x},\mathbf{x'}) = \frac{\sigma_w^2}{2\pi} r r' \big(\sin\varphi +\cos\varphi(\pi-\varphi)\big) \\
    & \label{ReLU ntk}
    \Theta(\mathbf{x},\mathbf{x'}) = \Sigma(\mathbf{x},\mathbf{x'}) + \frac{\sigma_w^2}{2\pi} r r' \cos\varphi (\pi-\varphi)
\end{align}
Here $r,r',\varphi$ are parameters from \eqref{eq:covariance_param0} with $l=1$ (we dropped index $1$). They have a clear interpretation in terms of extended input vectors $\tilde{\mathbf{x}}=(\sigma_w \mathbf{x},\sigma_b) \in \mathbb{R}^{d+1}$. Specifically, $r=\|\tilde{\mathbf{x}}\|, \: r'=\|\tilde{\mathbf{x'}}\|$ and $\varphi$ is the angle between $\tilde{\mathbf{x}}$ and $\tilde{\mathbf{x'}}$. 

\subsubsection{Mean Field regime}
This operating regime of infinitely wide networks is naturally defined \cite{mei2018mean,rotskoff2018trainability} for the shallow networks of the form
\begin{equation}
    f(\mathbf{W},\mathbf{x}) = \frac{1}{N} \sum\limits_{i=1}^N c_i \phi(\mathbf{w}_i\cdot \mathbf{x} + b_i) = \frac{1}{N} \sum\limits_{i=1}^N \tilde{\phi}(\tilde{\mathbf{w}}_i, \mathbf{x})
\end{equation}

Here $\tilde{\mathbf{w}}_i=(c_i,\mathbf{w}_i,b_i)$ denotes the collection of parameters, assiciated with a single neuron. In the infinite width limit $N \rightarrow \infty$ the evolution is described by a PDE on parameter density distribution $p(\tilde{\mathbf{w}})$:
\begin{align}
    &
    \partial_t p =  \nabla_{\tilde{\mathbf{w}}}\Big[p\nabla_{\tilde{\mathbf{w}}}\int K(\tilde{\mathbf{w}},\tilde{\mathbf{w}}') (p(\tilde{\mathbf{w}}') - p_\infty(\tilde{\mathbf{w}'}))d\tilde{\mathbf{w}}'\Big] \nonumber\\
    & \label{eq:MF_eq} 
    K(\tilde{\mathbf{w}},\tilde{\mathbf{w}}') = \int \tilde{\phi}(\tilde{\mathbf{w}}, \mathbf{x})\mu(\mathbf{x})\tilde{\phi}(\tilde{\mathbf{w}}', \mathbf{x})d\mathbf{x}
\end{align}
Under mild assumptions \cite{NEURIPS2018_a1afc58c}, solution of the MF equation converges to the global optimum $p_\infty(\tilde{\mathbf{w}}')$. At large times $t$ the MF equation can be linearized \cite{yarotsky2018collective} around $p_\infty$, thus bringing the network to the lazy training regime discussed in section \ref{sec:loss_asym}. The network NTK in this case is equal to
\begin{equation}
    \label{eq:MF_NTK0}
    \Theta(\mathbf{x},\mathbf{x'}) = \int  \nabla_{\tilde{\mathbf{w}}}\tilde{\phi}(\tilde{\mathbf{w}}, \mathbf{x}) p_\infty(\tilde{\mathbf{w}}) \nabla_{\tilde{\mathbf{w}}}\tilde{\phi}(\tilde{\mathbf{w}}, \mathbf{x}')d\tilde{\mathbf{w}}
\end{equation}
\subsection{Singular integral operators}\label{sec:singular}

Consider the evolution operator $\widetilde{\mathcal A}$ given by Eq.~\eqref{eq:wtadf0}. Under our assumptions, the kernel $\mu^{1/2}(\mathbf x)\Theta(\mathbf x,\mathbf x')\mu^{1/2}(\mathbf x')$ of this operator quickly falls off at infinity and is  smooth outside the diagonal $\mathbf x=\mathbf x'$, but, as we will see later, has a  homogeneous singularity on this diagonal.  In this setting, a general theory developed in \cite{Birman_1970} allows us to obtain the asymptotic distribution \eqref{lambda asym} of the eigenvalues with explicit constants $\Lambda$ and $\nu$.

Specifically, suppose that in a neighborhood of the diagonal the kernel $\Theta$ has a representation 
\begin{equation}\label{eq:theta10}\Theta(\mathbf x,\mathbf x')=\theta_{\mathbf x}(\mathbf{x}-\mathbf{x}')+\ldots,\end{equation}
where $\theta_{\mathbf x}(\cdot)$ is a (possibly $\mathbf x$-dependent) even ($\theta_{\mathbf x}(\mathbf z)=\theta_{\mathbf x}(-\mathbf z)$) homogeneous function of degree $\alpha$:
\begin{equation}\label{eq:theta20}\theta_{\mathbf x}(c\mathbf z)=|c|^\alpha\theta_{\mathbf x}(\mathbf z),\end{equation}
and the dots $\ldots$ denote terms of higher smoothness.

Let $N_\lambda$ denote the number of eigenvalues of $\ta$ greater than $\lambda$. Then, it is shown in \cite{Birman_1970} that for small $\lambda$, the leading term of $N_\lambda$ is given by
\begin{equation}\label{eq:nlambda}
N_\lambda \sim \Big(\int \gamma_\mathbf x\mu^{\tfrac{d}{d+\alpha}}(\mathbf x)d\mathbf x\Big)\lambda ^{-\tfrac{d}{d+\alpha}}.\end{equation}
Here, $\gamma_{\mathbf x}$ is defined as follows. Let $\widetilde \theta_{\mathbf x}$ denote the Fourier transform of the homogeneous function $\theta_{\mathbf x}$ defined using the Riesz summation formula:
\begin{equation}\label{eq:sing_Fourier}
    \widetilde \theta_{\mathbf x}(\mathbf k)=\lim_{r\to\infty}\int_{\|\mathbf x\|<r}(1-\tfrac{|\mathbf x|^2}{r^2})^c \theta(\mathbf x)e^{-i\mathbf k\cdot\mathbf x}d\mathbf x.
\end{equation}
For sufficiently large $c$ the limit exists and is independent of $c$, and the resulting function $\widetilde \theta_{\mathbf x}$ is a homogeneous function of degree $-(d+\alpha)$. We then set  
\begin{equation}\label{eq:sing_Fourier_vol}
\gamma_{\mathbf x}=(2\pi)^{-d}|\{\mathbf k\in\mathbb R^d: \widetilde \theta_{\mathbf x}(\mathbf k)>1\}|,
\end{equation}
where $|\cdot|$ denotes the Lebesgue measure.

Formula \eqref{eq:nlambda} can be derived as follows. Divide the domain $\mathbb R^d$ into multiple small subsets $\Omega_m$, and think of $\widetilde{\mathcal A}$ as an operator matrix corresponding to the decomposition $L^2(\mathbb R^d)=\oplus_m L^2(\Omega_m)$. Using the fall off and smoothness of the kernel outside the diagonal, one can show that the leading term of $N_\lambda$ is determined only by the diagonal elements of this operator matrix. Then, the leading term can be found by considering  each restriction $\widetilde{\mathcal A}|_{L^2(\Omega_m)}$ separately and summing the respective contributions to $N_\lambda$:
$$N_\lambda (\widetilde{\mathcal A})\sim \sum_{m} N_\lambda (\widetilde{\mathcal A}|_{L^2(\Omega_m)}).$$
If we decrease the size of each $\Omega_m$ by a factor $M$, then the number of terms in this sum increases $M^d$-fold, but at the same time each $N_\lambda (\widetilde{\mathcal A}|_{L^2(\Omega_m)})$ decreases also roughly $M^d$-fold, due to rescaling of eigenvalues. In the limit of infinitely small subsets $\Omega_m,$ the operator $\widetilde{\mathcal A}|_{L^2(\Omega_m)}$ can be approximated by the convolution with the homogeneous function $\theta_{\mathbf x}.$ The asymptotic form of its eigenvalues can then be written in terms of the Fourier transform $\widetilde \theta_{\mathbf x}$ as given above. The power $\tfrac{d}{d+\alpha}$ in Eq.~\eqref{eq:nlambda} can be deduced by observing that the volume of the $\mathbf k$-space corresponding to $\widetilde \theta_{\mathbf x}>\lambda$ scales as $\lambda^{-\tfrac{d}{d+\alpha}}.$ 

The formula \eqref{eq:nlambda} can be translated into the power law \eqref{lambda asym} by inverting the relation between $\lambda$ and $n$ (note that $N_{\lambda_n}=n$ for any $n$). Specifically, we find that the law  \eqref{lambda asym} holds with
\begin{equation}\label{eq:nu1ad}
    \nu = 1 + \frac{\alpha}{d},\quad
    \Lambda = \Big(\int \gamma_\mathbf x\mu^{1/\nu}(\mathbf x)d\mathbf x\Big)^{\nu}.
\end{equation}
In Section \ref{sec:coeff} we will show that this approach can be extended to yield the loss asymptotic \eqref{eq:loss_asymp1}.

\section{Asymptotic analysis of wide networks}\label{sec:asymp}
\subsection{NTK operators and their singularities}\label{sec:eigvalues_asym_base}
In this section we demonstrate the asymptotic law \eqref{lambda asym} and find constants $\nu, \Lambda$ for the shallow ReLU network in the NTK regime and with the data distribution $\mu(\mathbf{x})$ as described in Section \ref{sec:loss_asym}. The NTK of such network is given by \eqref{ReLU ntk}, \eqref{ReLU covariance} with $r,r'$ and $\varphi$ explicitly depending on input points $\mathbf{x},\mathbf{x'}$:
\begin{align}
        &\label{eq:r_x}
        r(\mathbf{x}) = \sqrt{\sigma_w^2 |\mathbf{x}|^2 + \sigma_b^2}, \quad r'\equiv r(\mathbf{x'})\\
        &\label{eq:phi_x}
        \varphi(\mathbf{x},\mathbf x') = \arccos \left(\frac{\sigma_w^2 \mathbf{x}\cdot\mathbf{x}'+\sigma_b^2}{r(\mathbf{x}) r(\mathbf{x'})}\right)
\end{align}
To use the spectral theory described in section \ref{sec:singular} we analyze the smoothness of the NTK $\Theta(\mathbf{x}, \mathbf{x'})$ defined in \eqref{ReLU ntk}, \eqref{ReLU covariance}. Firstly, $\Theta(\mathbf{x}, \mathbf{x'})$ is a smooth (infinitely differentiable) function of $r,r',\varphi$ on the whole domain. Now suppose that the bias term is not absent: $\sigma_b > 0$. Then $r(\mathbf{x})$ is a smooth function for all $\mathbf{x} \in \mathbb{R}^d$. The argument of $\arccos$ in \eqref{eq:phi_x} is also smooth everywhere, but $\arccos(z)$ itself is smooth on $(-1,1)$ and has divergent derivative at the end points $z=1,-1$ corresponding to $\varphi=0,\pi$. We see that condition $\sigma_b>0$ implies that the case $\varphi=\pi$ is never realized, while $\varphi(\mathbf{x},\mathbf{x}')=0$ at all coinciding inputs. Thus we established that $\Theta(\mathbf{x},\mathbf{x'})$ is smooth everywhere except the diagonal $\mathbf{x}=\mathbf{x'}$, where it might have a singularity.

To analyze the behavior at the diagonal, we first expand \eqref{eq:phi_x} for small $\delta\mathbf{x} = \mathbf{x}-\mathbf{x'}$ and get
\begin{equation}
\label{local angle}
    \varphi(\mathbf{x},\mathbf{x}') = \frac{\sigma_w\sqrt{r^2(\mathbf{x}) -\sigma_w^2|\mathbf{x}|^2\cos^2\psi}}{r^2(\mathbf{x})} |\delta\mathbf{x}| + O(|\delta \mathbf{x}|^2)
\end{equation}
Here $\psi$ is the angle between $\delta\mathbf{x}$ and $\mathbf{x}$. If $\sigma_b>0$, then the expression under the square root is always positive, therefore the angle $\varphi(\mathbf{x},\mathbf{x}')$ has a homogeneous singularity of degree 1 on the diagonal. Now we expand expressions \eqref{ReLU covariance} and $\eqref{ReLU ntk}$ for small $\varphi$ and find that both NTK and covariance indeed have a singularity on the diagonal with leading singular terms   
\begin{align}\label{eq:Sigma_sing}
    \Sigma_\text{sing}(\mathbf{x},\mathbf{x}') ={}& \frac{\sigma_w^2}{2\pi} r^2(\mathbf{x}) \frac{1}{3}\varphi^3(\mathbf{x},\mathbf{x}') \propto |\delta \mathbf{x}|^3\\
\label{eq:NTK_sing}
    \Theta_\text{sing}(\mathbf{x},\mathbf{x}') ={}& -\frac{\sigma_w^2}{2\pi} r^2(\mathbf{x}) \varphi(\mathbf{x},\mathbf{x}')\propto |\delta \mathbf{x}|
\end{align}
Quite importantly, only odd powers of $\varphi$ are singular, while even powers $\varphi^{2n}$ are smooth at the diagonal $\mathbf{x}=\mathbf{x'}$. This implies that the leading singular term is determined not by the smallest power $\varphi$, but by the smallest odd power.  

Now we see that both NTK $\Theta$ and covariance $\Sigma$ have the form needed for application of spectral theory discussed in section \ref{sec:singular}. For NTK we have singularity degree $\alpha=1$, which by Eq.~\eqref{eq:nu1ad} leads to the already announced exponent $\nu=1+\tfrac{1}{d}$ (see Fig.~\ref{fig:loss_dyn}). Interestingly enough, the singularity degree for the covariance is higher (namely, 3), resulting in a faster fall off of the corresponding eigenvalues. In the sequel (see Section \ref{sec:coeff}), this latter degree will appear in the analysis of loss asymptotic for target functions generated by the neural network Gaussian process, and will be denoted by $\beta$.  

\begin{figure}[t]
    \centering
    \includegraphics[width=0.48\textwidth,clip,trim= 0 5mm 0 5mm]{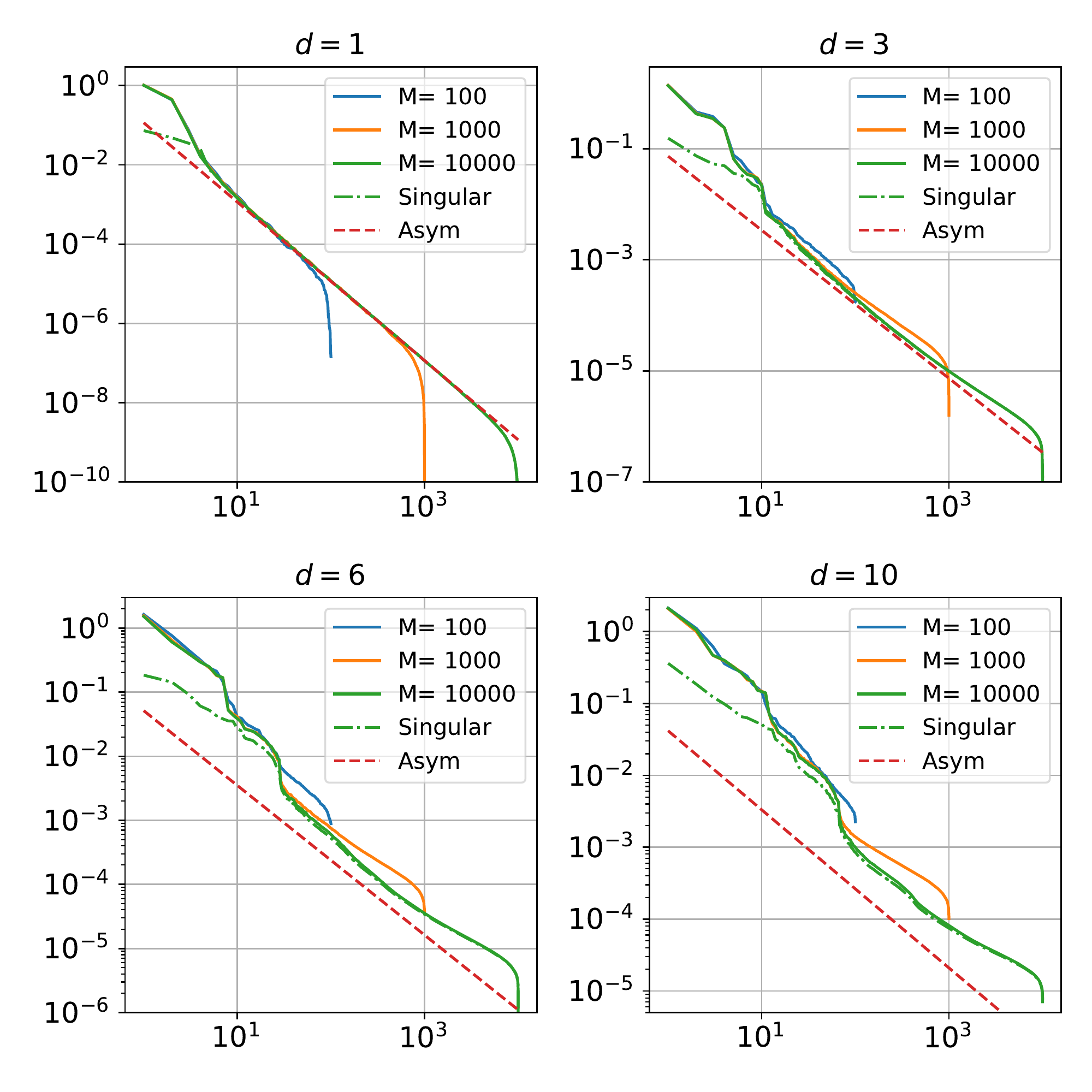}
    \caption{NTK eigenvalues $\lambda_n$ for networks with different input dimension $d$ and for different data set sizes $M$. \emph{Asym} shows the theoretical power law \eqref{lambda asym} with parameters $\Lambda,\nu$ derived in Section \ref{sec:eigvalues_asym_base}; in particular, $\nu=1+\tfrac{1}{d}$. \emph{Singular} corresponds to eigenvalues of the singular part of NTK  \eqref{eq:NTK_sing}. We see that for moderately big $n$ the singular part has the same eigenvalues as the full NTK. Observe that the number $n$ at which the spectrum converges to its asymptotic form increases with dimension $d$.} 
  \label{fig:ntk_eig_vals}
\end{figure}

In the case of NTK $\Theta(\mathbf{x,\mathbf{x'}})$ the singularity is essentially given by \eqref{local angle}. The corresponding Fourier transform in \eqref{eq:sing_Fourier} and the volume in \eqref{eq:sing_Fourier_vol} can be calculated analytically (see Section \ref{sec:gammax}). This leads to an explicit expression for the constant $\Lambda$: 
\begin{equation}\label{eq:Lambda}
\begin{split}
    \Lambda ={}& \tfrac{\sigma_w^3 \sigma_b^{\frac{1}{d}}}{(2\pi)^2} \Gamma(\tfrac{d+1}{2})  \Gamma(\tfrac{d}{2}+1)^{-(1+\frac{1}{d})}  \\
    &\times\big\langle \mu(\mathbf{x})^{-\frac{1}{d+1}}(r(\mathbf{x}))^{\frac{d-1}{d+1}}\big\rangle^{1+\frac{1}{d}}_\mu
\end{split}
\end{equation}
Here $\langle u(\mathbf{x}) \rangle_\mu$ denotes the integral $\int \mu(\mathbf{x})u(\mathbf{x}) d\mathbf{x}$. In the experiment, the value $\langle u(\mathbf{x}) \rangle_\mu$ can be computed by averaging $u$ over the data set distributed according to $\mu$. In Figure \ref{fig:ntk_eig_vals} we compare the theoretical eigenvalue distributions with the numerical NTK distributions for several dimensions $d$ and data set sizes $M$.

\subsection{The loss function}\label{sec:coeff}

We extend now the arguments of Section \ref{sec:singular} to derive the loss asymptotic \eqref{eq:loss_asymp1}. 
We consider two classes of functions $g$ representing the initial error \eqref{eq:g}. 

\paragraph{Scenario 1: a discontinuous function $g$.} We assume that $g$ is supported on a bounded subset $\Omega\subset \mathbb R^d$ with a smooth boundary $\partial \Omega$ so that $g$ has a discontinuity on this boundary but is smooth inside $\Omega$. An obvious example of $g$ is the indicator function of $\Omega.$ In this scenario we obtain
\begin{align}\label{eq:ltdisc0}
L(t)\sim{}&\int_{\partial\Omega}|\Delta g(\mathbf x)|^2 (\mu(\mathbf x)\widetilde{\theta}_{\mathbf x}(\mathbf n))^{-\frac{1}{d+\alpha}}dS\nonumber\\
&\times \tfrac{1}{2\pi}\Gamma(\tfrac{1}{d+\alpha}+1)\cdot(2t)^{-\frac{1}{d+\alpha}}.
\end{align}
Here, $\mathbf x$ is the point on the surface $\partial \Omega$, $\mathbf n$ is the unit normal to $\partial \Omega$, and $\Delta g(\mathbf x)$ is the size of the jump of $g$ at $\mathbf x$, given by the limit of $g(\mathbf y)$ as $\mathbf y$ approaches $\mathbf x$ from inside $\Omega.$ The eigenvalue, coefficient, and loss exponents in this scenario are, respectively,
\begin{equation}
    \kappa = \tfrac{1}{d},\quad\nu=1+\tfrac{\alpha}{d},\quad\xi=\tfrac{\kappa}{\nu}=\tfrac{1}{d+\alpha}.
\end{equation}

\paragraph{Scenario 2: $g$ generated by a Gaussian process.}
Suppose that $g$ is a realization of a Gaussian process with a covariance matrix $\Sigma(\mathbf x,\mathbf x')=\langle g(\mathbf x)g(\mathbf x') \rangle$ and that $\Sigma$ has a homogeneous singularity $\zeta_{\mathbf x}$ of degree $\beta$ at the diagonal (in the sense of Eqs. \eqref{eq:theta10},\eqref{eq:theta20}). In this scenario we find 
\begin{align}\label{eq:ltgauss0}
    L(t)
    \sim{}&\int_{\mathbb R^{d}}\int_{|\mathbf n|=1}\widetilde{\zeta}_{\mathbf x}(\mathbf n)(\mu(\mathbf x)\widetilde{\theta}_{\mathbf x}(\mathbf n))^{-\frac{\beta}{d+\alpha}}d\mathbf xdS\nonumber\\
    &\times \tfrac{1}{2(2\pi)^d\beta}\Gamma(\tfrac{\beta}{d+\alpha}+1)\cdot  (2t)^{-\frac{\beta}{d+\alpha}}.
\end{align}
Here, $\widetilde{\zeta}_{\mathbf x}$ is the Fourier transform of ${\zeta}_{\mathbf x}$ defined as in Eq. \eqref{eq:sing_Fourier}. The eigenvalue, coefficient, and loss exponents in this scenario are, respectively,
\begin{equation}
    \kappa = \tfrac{\beta}{d},\quad\nu=1+\tfrac{\alpha}{d},\quad\xi=\tfrac{\kappa}{\nu}=\tfrac{\beta}{d+\alpha}.
\end{equation}

In our experiments we model GP by a large network in the NTK regime. The corresponding covariance is analyzed in Section \ref{sec:eigvalues_asym_base} and has the singularity degree $\beta=3$. 

We provide the full derivations of Eqs.~\eqref{eq:ltdisc0} and \eqref{eq:ltgauss0} in Section \ref{sec:SM_coeffs} of SM, and sketch now the main ideas. Our general strategy is to complement the localized eigenvalue analysis of Section \ref{sec:singular} by the analysis of expansion coefficients. Note, however, that the simple approximation of $\ta$ by a direct sum of localized operators $\ta|_{L^2(\Omega_m)}$ (as performed in \cite{Birman_1970} and sketched in Section \ref{sec:singular}) is ``too rough'' for the study of expansion coefficients (due to a stronger effect of boundary conditions in each $\Omega_m$). Accordingly, we replace it by the short-time Fourier transform. The initial state $g$ is expanded as  
\begin{align*}
    F(\mathbf y, \mathbf k)={}&(2\pi)^{-d/2}\int_{\mathbb R^d} g(\mathbf x)\omega(\mathbf x-\mathbf y)e^{-i\mathbf k\cdot\mathbf x}d\mathbf x,
    \\
    g(\mathbf x)={}&(2\pi)^{-d/2}\int_{\mathbb R^d}\int_{\mathbb R^d} F(\mathbf y,\mathbf k)\omega(\mathbf x-\mathbf y)e^{i\mathbf k\cdot\mathbf x}d\mathbf yd\mathbf k,
\end{align*}   
where $\omega$ is a  window function such that $\int\omega^2=1$. The coefficient $F(\mathbf y, \mathbf k)$ describes the component of $g$ having the wave number $\mathbf k$ and localized at the point $\mathbf y.$ Then at large $t$, using the stationary phase method,
\begin{align}
    & g_t(\mathbf x)= e^{-t\ta}g(\mathbf x)\\
    & \sim
   (2\pi)^{-\frac{d}{2}}\int_{\mathbb R^d}\int_{\mathbb R^d} F(\mathbf y,\mathbf k) e^{-t\mu_{\mathbf x}\widetilde{\theta}_{\mathbf x}(\mathbf k)}\omega(\mathbf x-\mathbf y)e^{i\mathbf k\cdot\mathbf x}d\mathbf yd\mathbf k.\nonumber
\end{align}
The leading contribution to this integral comes from large $\mathbf k$. For such $\mathbf k,$ we can write  the coefficients $F(\mathbf y,\mathbf k)$ in an asymptotic form, primarily determined by the singularities of $g$. By integrating out $\mathbf y$ and $\mathbf k$, we then arrive at the desired Eqs.~\eqref{eq:ltdisc0} and \eqref{eq:ltgauss0}.

The above argument establishes the loss asymptotic \eqref{eq:loss_asymp1} while bypassing the computation of the asymptotic \eqref{tail asym} of the expansion coefficients aligned with the sorted ``global'' eigenvalues. This latter asymptotic (including the  coefficient $K$) can be found by a similar computation or deduced using Eq. \eqref{eq:loss_asymp}, see Section  \ref{sec:q}.  

\section{Extensions}\label{sec:extensions}
The power law asymptotic of eigenvalues obtained in section \ref{sec:eigvalues_asym_base} was based on the analysis of diagonal singularity of the NTK in the setting of ReLU activation, shallow depth 2, and the NTK regime.  
We argue now that our general approach is not restricted to this narrow setting. To show this, we separately consider three modification of the network from Section \ref{sec:eigvalues_asym_base}. In this section we mostly describe final results, with the derivations described in Section \ref{sec:SM_extension} of SM.

\subsection{Activations of different smoothness}\label{sec:diffsmooth}
\begin{figure}[t]
    \centering
    \includegraphics[width=0.48\textwidth, clip, trim= 5mm 5mm 0 5mm]{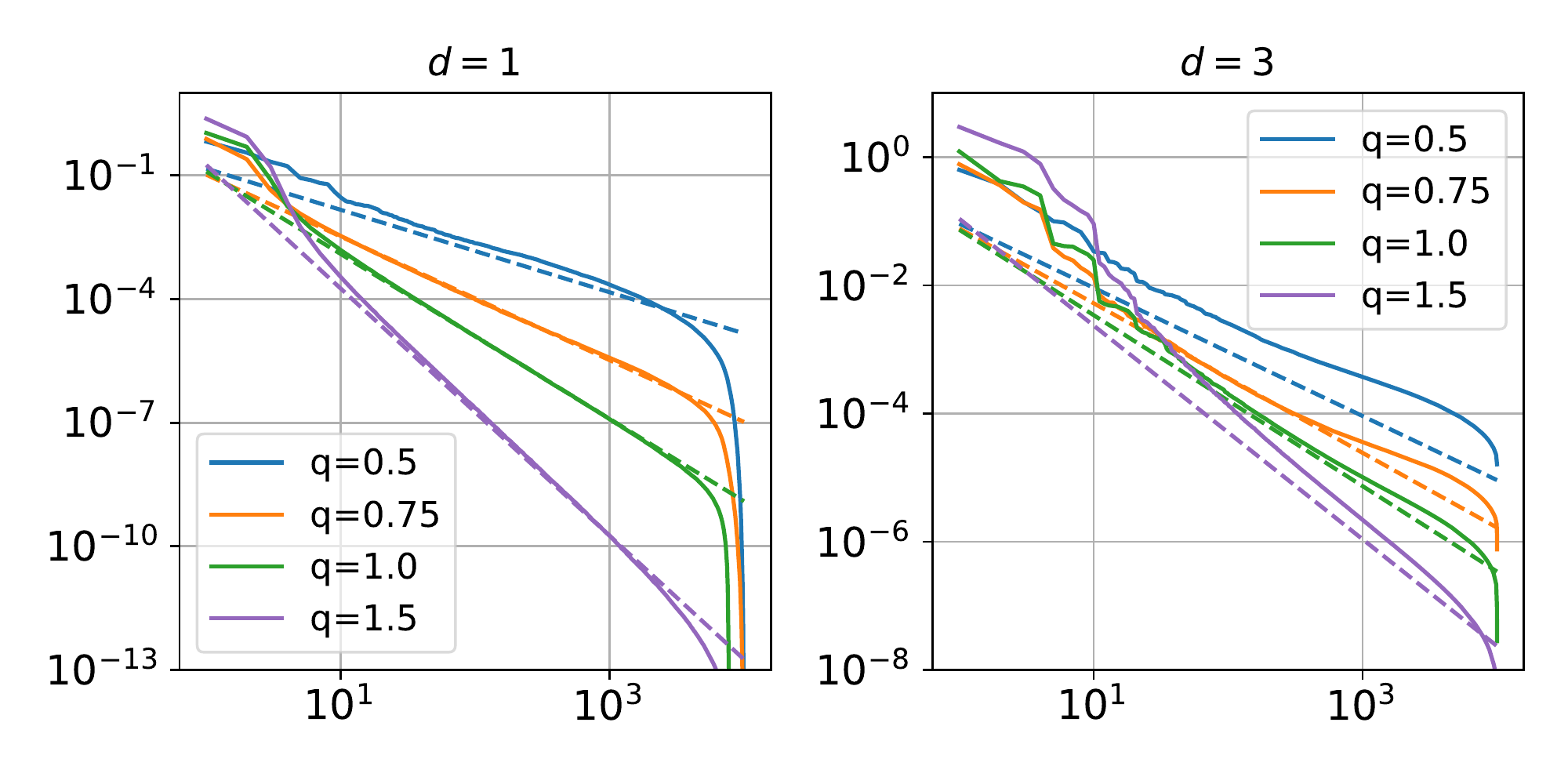}
    \caption{NTK eigenvalues for activation functions $\phi(z) = (z)_+^{\; q}$ with varying smoothness $q$. The theoretical distributions (dashed lines) have the exponents $\nu_q=1+\tfrac{2q-1}{d}$ (see Section \ref{sec:diffsmooth}).} 
  \label{fig:relu_smoothness}
\end{figure}

Let's consider a shallow network in NTK regime with activation function $\phi_q(z)=(z)_+^{\;q}, \; q>0$ (``a ReLU with the altered smoothness $q$'').
Similarly to the ReLU case, one can show that NTK in current setting has a singularity on the diagonal for all values of $q$ except half-integers $q=\tfrac{1}{2},\tfrac{3}{2},\ldots$. The leading singular term is
\begin{align}
    & \label{ReLU_q ntk}
    \Theta_q(\mathbf{x},\mathbf{x'}) = \frac{\sigma_w^2}{2\pi} r^q r'^q a_q \varphi^{2q-1}  \\
    &\label{eq:a_q}
    a_q = \frac{\Gamma^2(q)\Gamma(\tfrac{1}{2}-q)}{\sqrt{\pi}2^{q}}
\end{align}
Here $r,r',\varphi$ are the same as in section \ref{sec:asymp}. 

The singularity of NTK with degree $\alpha=2q-1$ implies the eigenvalue power law asymptotic \eqref{lambda asym} with the exponent $\nu_q=1 + \tfrac{2q-1}{d}$. Thus, the singularity degree in the NTK is determined by the singularity degree of the activation function. The coefficient $\Lambda_q$ can be explicitly computed as
\begin{equation}\label{eq:lambdaq}
    \begin{split}
    \Lambda_q ={}& \sigma_w^{\alpha+2} \sigma_b^{\tfrac{\alpha}{d}} q^2 (2\pi)^{d+q-2} \frac{\Gamma\big(\tfrac{d+\alpha}{2}\big) \Gamma^2(q)}{\big(\Gamma(\frac{d}{2}+1)\big)^{\frac{d+\alpha}{d}}}  \\
    &\times\left\langle \mu(\mathbf{x})^{-\frac{\alpha}{d+\alpha}}r(\mathbf{x})^{\frac{2d-\alpha d - \alpha}{d+\alpha}}\right\rangle^{\frac{d+\alpha}{d}}_\mu
\end{split}
\end{equation}

In Figure \ref{fig:relu_smoothness} we compare the theoretical and numerical eigenvalue distributions for several values of $d$ and $q$.

\subsection{Deep networks}\label{sec:deep}

\begin{figure}[t]
    \centering
    \includegraphics[width=0.48\textwidth, clip, trim= 5mm 8mm 0 5mm]{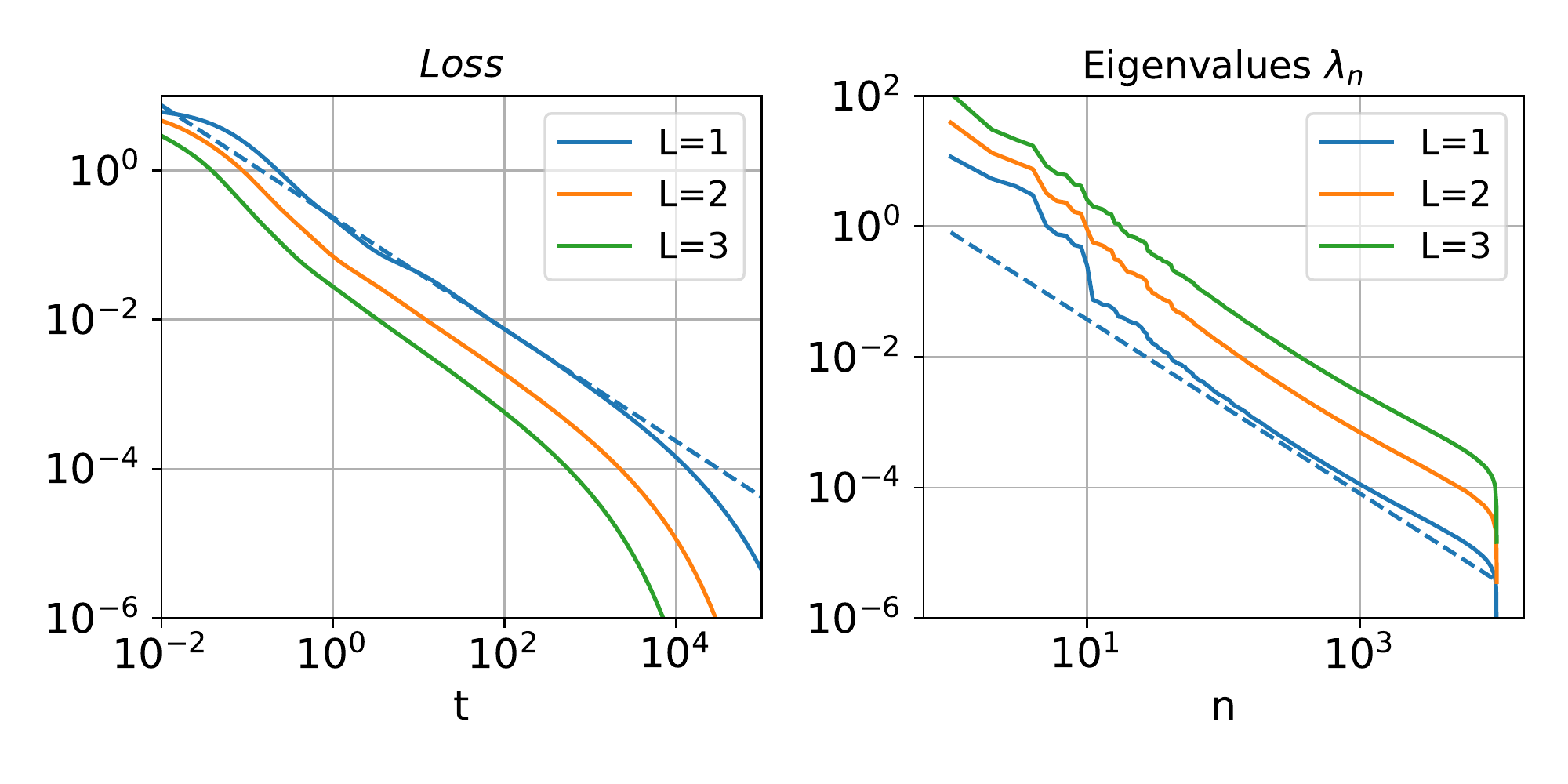}
    \caption{Loss evolution and NTK eigenvalue distribution for networks with varying number of hidden layers $L$ trained on 3-dimensional data and target generated by network GP ($\beta=3$). In agreement with theory, the numerical results (solid lines) show the scaling exponents $\xi=\tfrac{3}{d+1}$ and $\nu=1+\tfrac{1}{d}$  for all depths $L$ (see Section \ref{sec:deep}). The theoretical predictions (dashed lines) are only shown for $L=1$ (for other $L$ the theoretical lines would have the same slopes but different positions determined by the coefficients $C$ and $\Lambda$; due to computation complexity, we have found $C$ and $\Lambda$ only for $L=1$).} 
  \label{fig:deep_net}
\end{figure}

We consider now a network of arbitrary depth $L>2$ in the NTK regime and with the ReLU activation function. Similarly to the shallow ReLU case, the angles $\varphi_l$ (see Eq.~\eqref{eq:covariance_param0}) are singular on the diagonal: $\varphi_l \propto |\mathbf{x}-\mathbf{x'}|$. Using relations \eqref{eq:recur0} one can obtain recursive relations for $\varphi_l,r_l$ and finally for the singular part of NTK $\Theta^{(l)}_{\text{sing}}$
\begin{equation}\label{eq:NTK_sing_recur}
    \begin{split}
    & \Theta^{(l+1)}_{\text{diag}} = r_{l+1}^2 + \frac{\sigma_w^2}{2} \Theta^{(l)}_{\text{diag}} \\
    & \Theta^{(l+1)}_{\text{sing}} = -\frac{1}{2\pi}\Theta^{(l)}_{\text{diag}} \varphi_l + \frac{\sigma_w^2}{2} \Theta^{(l)}_{\text{sing}}
    \end{split}
\end{equation}
Here $\Theta^{(l+1)}_{\text{diag}}$ is the value of NTK on the diagonal. Since $\Theta^{(1)}_{\text{sing}}=0$, we can see from \eqref{eq:NTK_sing_recur} that $\Theta^{(L)}_{\text{sing}}$ is a weighted sum of $-\varphi_l$ for $l=1,\ldots,L-1$. Thus, the singularity degree of the NTK is $\alpha=1$ and the eigenvalue power law \eqref{lambda asym} holds with $\nu=1+\tfrac{1}{d}$. However, obtaining explicit formula for $\Lambda$ is harder for this case and we leave it for future work.  In Figure \ref{fig:deep_net} we compare the theoretical and numerical eigenvalue distributions for NTK's of deep networks.

\subsection{MF regime}\label{sec:mf}
\begin{figure}[tb]
    \centering
    \includegraphics[width=0.48\textwidth,clip,trim= 5mm 8mm 0 5mm ]{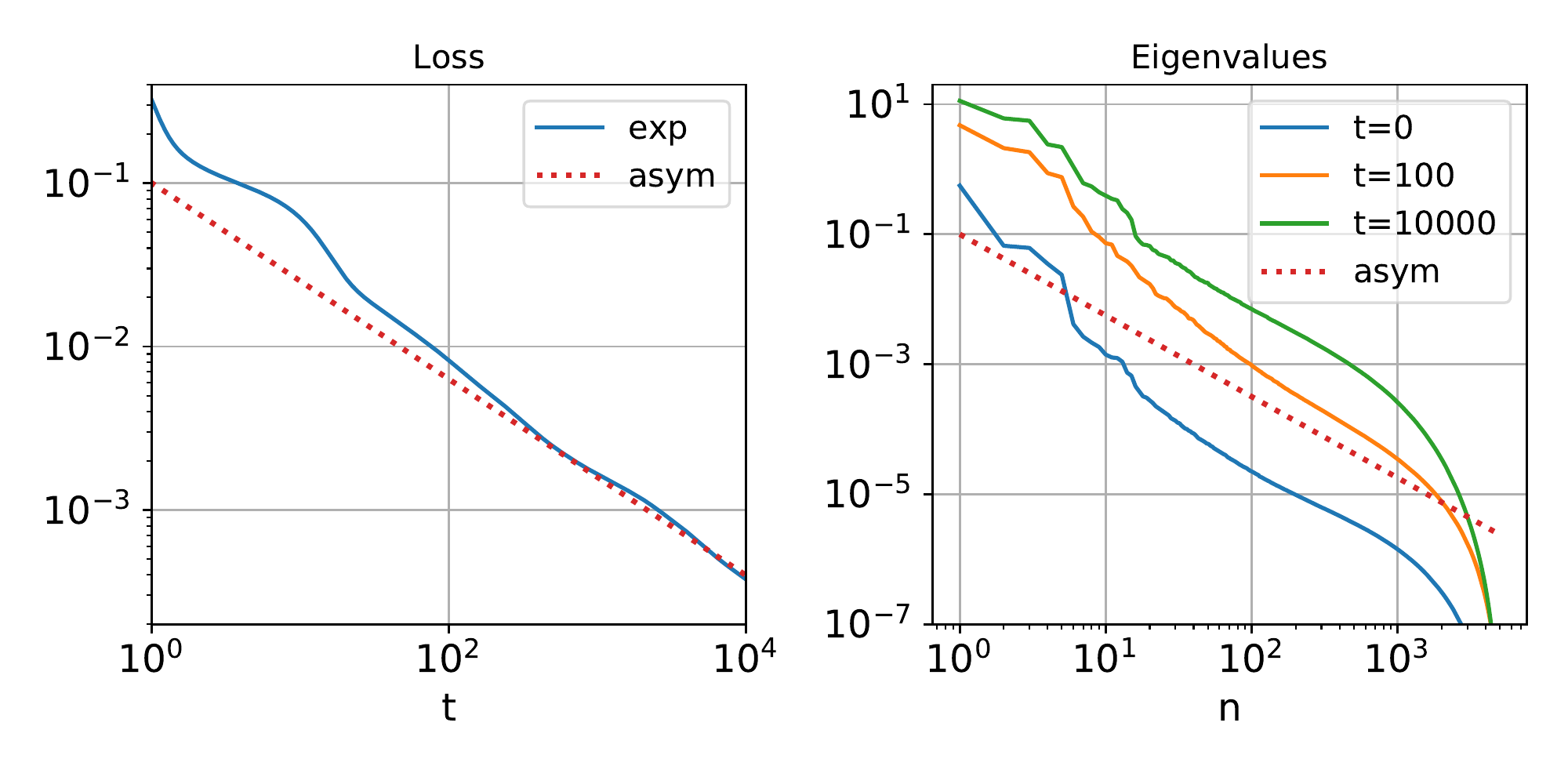}
    \caption{Loss dynamic and eigenvalue distribution at different moments of time for a network in the MF regime (Section \ref{sec:mf}). The network approximates GP on inputs with dimension $d=4$. \textbf{Left:} comparison of the theoretical and experimental loss evolution. The theoretical exponent $\xi=\tfrac{\beta}{d+1}=\tfrac{3}{5}$.  \textbf{Right:} the theoretical eigenvalue distribution and its experimental counterparts at different times. The theoretical exponent is $\nu=\tfrac{d+1}{d}=\tfrac{5}{4}$.} 
  \label{fig:mf_net}
\end{figure}

In this section we consider a shallow network in the MF regime and with the ReLU activation function, with the NTK given by \eqref{eq:MF_NTK0}. The neural tangent kernel \eqref{ReLU ntk} of the shallow NTK network  can be obtained from \eqref{eq:MF_NTK0} with the distribution $p_\infty(c,\mathbf{w},b)$ taken as a product of Gaussians  for each variable. Therefore, neural tangent kernels in MF regime represent a broader class of kernels, containing our basic example \eqref{ReLU ntk}. It turns out that the diagonal singularity is present for all sufficiently smooth and quickly decaying distributions $p_\infty$ of this broader class:
\begin{equation*}
    \Theta_{\text{sing}}(\mathbf{x},\mathbf{x'}) = -\frac{\sqrt{1+|\mathbf{x}|^2}}{2}\int\limits_{\mathbf{w}\cdot\mathbf{x}=-b} \big|\mathbf{w}\cdot\delta\mathbf{x}\big| dp_{\infty,2}(\mathbf{w},b)
\end{equation*}
Here $\delta \mathbf{x}=\mathbf{x}-\mathbf{x'}$ and $p_{\infty,2}$ is the second moment of the distribution $p_\infty$ w.r.t. the variable $c$, i.e. $p_{\infty,2}(\mathbf{w},b)=\int c^2 p_{\infty}(c,\mathbf{w},b)dc $. Thus, the eigenvalue power law exponent is the same as in the NTK regime: $\nu=1+\tfrac{1}{d}$. 

In Figure \ref{fig:mf_net} we show the loss dynamic and the eigenvalue distribution at different moments of time for a network in the MF regime. Since the neural tangent kernel significantly changes during the training of the MF network,  picking NTK's at different moments of training provides sufficiently distinct and general kernels of the form \eqref{eq:MF_NTK0}.

\bibliography{universal_scaling}

\begin{thebibliography}{31}
\providecommand{\natexlab}[1]{#1}
\providecommand{\url}[1]{\texttt{#1}}
\expandafter\ifx\csname urlstyle\endcsname\relax
  \providecommand{\doi}[1]{doi: #1}\else
  \providecommand{\doi}{doi: \begingroup \urlstyle{rm}\Url}\fi

\bibitem[Adlam \& Pennington(2020)Adlam and Pennington]{pmlr-v119-adlam20a}
Adlam, B. and Pennington, J.
\newblock The neural tangent kernel in high dimensions: Triple descent and a
  multi-scale theory of generalization.
\newblock In III, H.~D. and Singh, A. (eds.), \emph{Proceedings of the 37th
  International Conference on Machine Learning}, volume 119 of
  \emph{Proceedings of Machine Learning Research}, pp.\  74--84. PMLR, 13--18
  Jul 2020.
\newblock URL \url{http://proceedings.mlr.press/v119/adlam20a.html}.

\bibitem[Agarwal et~al.(2021)Agarwal, Awasthi, and Kale]{pmlr-v132-agarwal21b}
Agarwal, N., Awasthi, P., and Kale, S.
\newblock A deep conditioning treatment of neural networks.
\newblock In Feldman, V., Ligett, K., and Sabato, S. (eds.), \emph{Proceedings
  of the 32nd International Conference on Algorithmic Learning Theory}, volume
  132 of \emph{Proceedings of Machine Learning Research}, pp.\  249--305. PMLR,
  16--19 Mar 2021.
\newblock URL \url{http://proceedings.mlr.press/v132/agarwal21b.html}.

\bibitem[Arora et~al.(2019)Arora, Du, Hu, Li, and Wang]{pmlr-v97-arora19a}
Arora, S., Du, S., Hu, W., Li, Z., and Wang, R.
\newblock Fine-grained analysis of optimization and generalization for
  overparameterized two-layer neural networks.
\newblock In Chaudhuri, K. and Salakhutdinov, R. (eds.), \emph{Proceedings of
  the 36th International Conference on Machine Learning}, volume~97 of
  \emph{Proceedings of Machine Learning Research}, pp.\  322--332. PMLR, 09--15
  Jun 2019.
\newblock URL \url{http://proceedings.mlr.press/v97/arora19a.html}.

\bibitem[Bahri et~al.(2021)Bahri, Dyer, Kaplan, Lee, and
  Sharma]{bahri2021explaining}
Bahri, Y., Dyer, E., Kaplan, J., Lee, J., and Sharma, U.
\newblock Explaining neural scaling laws.
\newblock \emph{arXiv preprint arXiv:2102.06701}, 2021.

\bibitem[Birman \& Solomjak(1970)Birman and Solomjak]{Birman_1970}
Birman, M.~{\v{S}}. and Solomjak, M.~Z.
\newblock Asymptotic behavior of the spectrum of weakly polar integral
  operators.
\newblock \emph{Mathematics of the {USSR}-Izvestiya}, 4\penalty0 (5):\penalty0
  1151--1168, oct 1970.
\newblock \doi{10.1070/im1970v004n05abeh000948}.
\newblock URL \url{https://doi.org/10.1070/im1970v004n05abeh000948}.

\bibitem[Canatar et~al.(2021)Canatar, Bordelon, and
  Pehlevan]{canatar2021spectral}
Canatar, A., Bordelon, B., and Pehlevan, C.
\newblock Spectral bias and task-model alignment explain generalization in
  kernel regression and infinitely wide neural networks.
\newblock \emph{arXiv preprint arXiv:2006.13198}, 2021.

\bibitem[Cao et~al.(2020)Cao, Fang, Wu, Zhou, and Gu]{cao2020understanding}
Cao, Y., Fang, Z., Wu, Y., Zhou, D.-X., and Gu, Q.
\newblock Towards understanding the spectral bias of deep learning.
\newblock \emph{arXiv preprint arXiv:1606.05340}, 2020.

\bibitem[Chizat \& Bach(2018)Chizat and Bach]{NEURIPS2018_a1afc58c}
Chizat, L. and Bach, F.
\newblock On the global convergence of gradient descent for over-parameterized
  models using optimal transport.
\newblock In Bengio, S., Wallach, H., Larochelle, H., Grauman, K.,
  Cesa-Bianchi, N., and Garnett, R. (eds.), \emph{Advances in Neural
  Information Processing Systems}, volume~31. Curran Associates, Inc., 2018.
\newblock URL
  \url{https://proceedings.neurips.cc/paper/2018/file/a1afc58c6ca9540d057299ec3016d726-Paper.pdf}.

\bibitem[Chizat et~al.(2019)Chizat, Oyallon, and Bach]{chizat2019lazy}
Chizat, L., Oyallon, E., and Bach, F.
\newblock On lazy training in differentiable programming.
\newblock \emph{Advances in Neural Information Processing Systems},
  32:\penalty0 2937--2947, 2019.

\bibitem[Cho \& Saul(2009)Cho and Saul]{NIPS2009_kernel}
Cho, Y. and Saul, L.
\newblock Kernel methods for deep learning.
\newblock In Bengio, Y., Schuurmans, D., Lafferty, J., Williams, C., and
  Culotta, A. (eds.), \emph{Advances in Neural Information Processing Systems},
  volume~22, pp.\  342--350. Curran Associates, Inc., 2009.
\newblock URL
  \url{https://proceedings.neurips.cc/paper/2009/file/5751ec3e9a4feab575962e78e006250d-Paper.pdf}.

\bibitem[Choromanska et~al.(2015)Choromanska, Henaff, Mathieu, Arous, and
  LeCun]{choromanska2015loss}
Choromanska, A., Henaff, M., Mathieu, M., Arous, G.~B., and LeCun, Y.
\newblock The loss surfaces of multilayer networks.
\newblock In \emph{Artificial intelligence and statistics}, pp.\  192--204.
  PMLR, 2015.

\bibitem[Fan \& Wang(2020)Fan and Wang]{NEURIPS2020_572201a4}
Fan, Z. and Wang, Z.
\newblock Spectra of the conjugate kernel and neural tangent kernel for
  linear-width neural networks.
\newblock In Larochelle, H., Ranzato, M., Hadsell, R., Balcan, M.~F., and Lin,
  H. (eds.), \emph{Advances in Neural Information Processing Systems},
  volume~33, pp.\  7710--7721. Curran Associates, Inc., 2020.
\newblock URL
  \url{https://proceedings.neurips.cc/paper/2020/file/572201a4497b0b9f02d4f279b09ec30d-Paper.pdf}.

\bibitem[Golikov(2020{\natexlab{a}})]{pmlr-v119-golikov20a}
Golikov, E.
\newblock Towards a general theory of infinite-width limits of neural
  classifiers.
\newblock In III, H.~D. and Singh, A. (eds.), \emph{Proceedings of the 37th
  International Conference on Machine Learning}, volume 119 of
  \emph{Proceedings of Machine Learning Research}, pp.\  3617--3626. PMLR,
  13--18 Jul 2020{\natexlab{a}}.
\newblock URL \url{http://proceedings.mlr.press/v119/golikov20a.html}.

\bibitem[Golikov(2020{\natexlab{b}})]{golikov2020dynamically}
Golikov, E.~A.
\newblock Dynamically stable infinite-width limits of neural classifiers.
\newblock \emph{arXiv preprint arXiv:2006.06574}, 2020{\natexlab{b}}.

\bibitem[Hayou et~al.(2019)Hayou, Doucet, and Rousseau]{pmlr-v97-hayou19a}
Hayou, S., Doucet, A., and Rousseau, J.
\newblock On the impact of the activation function on deep neural networks
  training.
\newblock In Chaudhuri, K. and Salakhutdinov, R. (eds.), \emph{Proceedings of
  the 36th International Conference on Machine Learning}, volume~97 of
  \emph{Proceedings of Machine Learning Research}, pp.\  2672--2680. PMLR,
  09--15 Jun 2019.
\newblock URL \url{http://proceedings.mlr.press/v97/hayou19a.html}.

\bibitem[Hayou et~al.(2021)Hayou, Doucet, and Rousseau]{hayou2021meanfield}
Hayou, S., Doucet, A., and Rousseau, J.
\newblock Mean-field behaviour of neural tangent kernel for deep neural
  networks.
\newblock \emph{arXiv preprint arXiv:1905.13654}, 2021.

\bibitem[Hu et~al.(2020)Hu, Xiao, Adlam, and Pennington]{NEURIPS2020_c6dfc6b7}
Hu, W., Xiao, L., Adlam, B., and Pennington, J.
\newblock The surprising simplicity of the early-time learning dynamics of
  neural networks.
\newblock In Larochelle, H., Ranzato, M., Hadsell, R., Balcan, M.~F., and Lin,
  H. (eds.), \emph{Advances in Neural Information Processing Systems},
  volume~33, pp.\  17116--17128. Curran Associates, Inc., 2020.
\newblock URL
  \url{https://proceedings.neurips.cc/paper/2020/file/c6dfc6b7c601ac2978357b7a81e2d7ae-Paper.pdf}.

\bibitem[Jacot et~al.(2018)Jacot, Gabriel, and Hongler]{NEURIPS2018_5a4be1fa}
Jacot, A., Gabriel, F., and Hongler, C.
\newblock Neural tangent kernel: Convergence and generalization in neural
  networks.
\newblock In Bengio, S., Wallach, H., Larochelle, H., Grauman, K.,
  Cesa-Bianchi, N., and Garnett, R. (eds.), \emph{Advances in Neural
  Information Processing Systems}, volume~31. Curran Associates, Inc., 2018.
\newblock URL
  \url{https://proceedings.neurips.cc/paper/2018/file/5a4be1fa34e62bb8a6ec6b91d2462f5a-Paper.pdf}.

\bibitem[Lee et~al.(2019)Lee, Xiao, Schoenholz, Bahri, Novak, Sohl-Dickstein,
  and Pennington]{NEURIPS2019_0d1a9651}
Lee, J., Xiao, L., Schoenholz, S., Bahri, Y., Novak, R., Sohl-Dickstein, J.,
  and Pennington, J.
\newblock Wide neural networks of any depth evolve as linear models under
  gradient descent.
\newblock In Wallach, H., Larochelle, H., Beygelzimer, A., Fox, E., and
  Garnett, R. (eds.), \emph{Advances in Neural Information Processing Systems},
  volume~32. Curran Associates, Inc., 2019.
\newblock URL
  \url{https://proceedings.neurips.cc/paper/2019/file/0d1a9651497a38d8b1c3871c84528bd4-Paper.pdf}.

\bibitem[Lewkowycz et~al.(2020)Lewkowycz, Bahri, Dyer, Sohl-Dickstein, and
  Gur-Ari]{lewkowycz2020large}
Lewkowycz, A., Bahri, Y., Dyer, E., Sohl-Dickstein, J., and Gur-Ari, G.
\newblock The large learning rate phase of deep learning: the catapult
  mechanism.
\newblock \emph{arXiv preprint arXiv:2003.02218}, 2020.

\bibitem[Liu et~al.(2020)Liu, Zhu, and Belkin]{NEURIPS2020_b7ae8fec}
Liu, C., Zhu, L., and Belkin, M.
\newblock On the linearity of large non-linear models: when and why the tangent
  kernel is constant.
\newblock In Larochelle, H., Ranzato, M., Hadsell, R., Balcan, M.~F., and Lin,
  H. (eds.), \emph{Advances in Neural Information Processing Systems},
  volume~33, pp.\  15954--15964. Curran Associates, Inc., 2020.
\newblock URL
  \url{https://proceedings.neurips.cc/paper/2020/file/b7ae8fecf15b8b6c3c69eceae636d203-Paper.pdf}.

\bibitem[Mei et~al.(2018)Mei, Montanari, and Nguyen]{mei2018mean}
Mei, S., Montanari, A., and Nguyen, P.-M.
\newblock A mean field view of the landscape of two-layer neural networks.
\newblock \emph{Proceedings of the National Academy of Sciences}, 115\penalty0
  (33):\penalty0 E7665--E7671, 2018.

\bibitem[Nitanda \& Suzuki(2021)Nitanda and Suzuki]{nitanda2021optimal}
Nitanda, A. and Suzuki, T.
\newblock Optimal rates for averaged stochastic gradient descent under neural
  tangent kernel regime.
\newblock In \emph{International Conference on Learning Representations}, 2021.
\newblock URL \url{https://openreview.net/forum?id=PULSD5qI2N1}.

\bibitem[Pennington \& Bahri(2017)Pennington and Bahri]{pennington2017geometry}
Pennington, J. and Bahri, Y.
\newblock Geometry of neural network loss surfaces via random matrix theory.
\newblock In \emph{International Conference on Machine Learning}, pp.\
  2798--2806. PMLR, 2017.

\bibitem[Poole et~al.(2016)Poole, Lahiri, Raghu, Sohl-Dickstein, and
  Ganguli]{poole2016exponential}
Poole, B., Lahiri, S., Raghu, M., Sohl-Dickstein, J., and Ganguli, S.
\newblock Exponential expressivity in deep neural networks through transient
  chaos.
\newblock \emph{arXiv preprint arXiv:1606.05340}, 2016.

\bibitem[Rotskoff \& Vanden-Eijnden(2018)Rotskoff and
  Vanden-Eijnden]{rotskoff2018trainability}
Rotskoff, G.~M. and Vanden-Eijnden, E.
\newblock Trainability and accuracy of neural networks: An interacting particle
  system approach.
\newblock \emph{arXiv preprint arXiv:1805.00915}, 2018.

\bibitem[Su \& Yang(2019)Su and Yang]{NEURIPS2019_253f7b5d}
Su, L. and Yang, P.
\newblock On learning over-parameterized neural networks: A functional
  approximation perspective.
\newblock In Wallach, H., Larochelle, H., Beygelzimer, A., Fox, E., and
  Garnett, R. (eds.), \emph{Advances in Neural Information Processing Systems},
  volume~32. Curran Associates, Inc., 2019.
\newblock URL
  \url{https://proceedings.neurips.cc/paper/2019/file/253f7b5d921338af34da817c00f42753-Paper.pdf}.

\bibitem[Williams et~al.(2019)Williams, Trager, Silva, Panozzo, Zorin, and
  Bruna]{williams2019gradient}
Williams, F., Trager, M., Silva, C., Panozzo, D., Zorin, D., and Bruna, J.
\newblock Gradient dynamics of shallow univariate relu networks.
\newblock \emph{Advances in Neural Information Processing Systems}, 32, 2019.

\bibitem[Xiao et~al.(2020)Xiao, Pennington, and Schoenholz]{pmlr-v119-xiao20b}
Xiao, L., Pennington, J., and Schoenholz, S.
\newblock Disentangling trainability and generalization in deep neural
  networks.
\newblock In III, H.~D. and Singh, A. (eds.), \emph{Proceedings of the 37th
  International Conference on Machine Learning}, volume 119 of
  \emph{Proceedings of Machine Learning Research}, pp.\  10462--10472. PMLR,
  13--18 Jul 2020.
\newblock URL \url{http://proceedings.mlr.press/v119/xiao20b.html}.

\bibitem[Yang \& Salman(2019)Yang and Salman]{yang2019a}
Yang, G. and Salman, H.
\newblock A fine-grained spectral perspective on neural networks.
\newblock In \emph{Eighth International Conference on Learning Representations
  (ICLR 2020)}, September 2019.
\newblock URL
  \url{https://www.microsoft.com/en-us/research/publication/a-fine-grained-spectral-perspective-on-neural-networks/}.
\newblock Submitted to ICLR 2020.

\bibitem[Yarotsky(2018)]{yarotsky2018collective}
Yarotsky, D.
\newblock Collective evolution of weights in wide neural networks.
\newblock \emph{arXiv preprint arXiv:1810.03974}, 2018.

\end{thebibliography}
\bibliographystyle{icml2021}

\appendix
\onecolumn

\numberwithin{equation}{section}

\DeclarePairedDelimiter\floor{\lfloor}{\rfloor}
\def\lf{\left\lfloor}   
\def\rf{\right\rfloor}

\begin{center}
{\Large Supplementary material}
\end{center}

\section{Details of experiments}\label{sec:exp_det}
In this work we have two types of experiments. Both types operate with dataset consisting of $M$ data samples drawn from some distribution $\mu(\mathbf{x})$. 

In our experiments we focus on distributions $\mu$ different from simple standard distributions such us  spherical Gaussian or uniform in the cube. Highly symmetric distributions such as the spherical Gaussian make the whole problem analytically solvable (see, for example, \cite{yang2019a}). In contrast, our theory do not rely on the symmetry, and we test it on distributions $\mu$ without any symmetry. In all our experiments, the distribution $\mu(\mathbf{x})$ is constructed as follows: we randomly choose $n_g$ points in the cube $[-1,1]^d$ and consider $n_g$ symmetric Gaussian distributions with centers in these points and standard deviation $\sigma$; then $\mu(\mathbf{x})$ is defined as the average of these $n_g$ Gaussian distributions. Although each separate Gaussian distribution is symmetric, the average of $n_g > d$ randomly located Gaussian distributions almost surely removes all symmetries w.r.t. orthogonal transformations of $\mathbb{R}^d$. The typical values used in our experiments are $n_g=8$ and $\sigma=0.5.$        

In the first type of experiments we analytically calculate the NTK using, for example, expression \eqref{ReLU ntk}, and then numerically diagonalize the corresponding matrix and decompose a target function over its eigenvectors. Then the linear evolution \eqref{eq:dfdta} can be easily computed in obtained eigenbasis. The center and right parts of figure \ref{fig:loss_dyn} as well as figures \ref{fig:ntk_eig_vals},\ref{fig:relu_smoothness},\ref{fig:deep_net} correspond to this type of experiments. Thus, it can be considered as an experiment with infinitely wide network, but a finite dataset. In all such experiments we take the largest $M$ possible, which is bounded by $O(M^3)$ time cost of numerical diagonalization and $O(M^2)$ memory cost of storing the NTK. The typical value used in experiments is $M=10000$. To calculate the constant $\Lambda$ in the eigenvalue asymptotics \eqref{eq:Lambda} and \eqref{eq:lambdaq} we draw another, rather big, set of points from the distribution $\mu(\mathbf{x})$ and use this dataset for Monte Carlo estimation of the $\langle \cdot \rangle_\mu$ averages in \eqref{eq:Lambda} and \eqref{eq:lambdaq}.                

In the second type of experiments we initialize and train actual wide network (typical width $N=3000$). The left part of figure \ref{fig:loss_dyn} and figure \ref{fig:mf_net} correspond to this type of experiments. To reach large values of time $t$ we choose the learning rate $\eta$ close to its critical value $\eta_c = \tfrac{2}{\lambda_0}$, above which the dynamic in the $0$'th eigenspace start to diverge exponentially and the network leaves the regime of approximately constant NTK. The network in the MF regime adapts to learning rates higher than critical one at initialization, but adaptation resource is limited. Overall, experiments of this type test our theoretical predictions for roughly practical sizes of networks and datasets. 

In the experiments we considered two types of target functions. The first is a draw from a Gaussian process, which we model by a very wide $N=10^6$ shallow network with NTK parametrization. Thus, the covariance of GP is given by \eqref{ReLU covariance}. To calculate the coefficient $C$ in the loss asymptotic \eqref{eq:loss_asymp1} we use Eq.~\eqref{eq:ltgauss0} with the sphere integral taken analytically as shown in  \eqref{eq:sphere_int}. The second type of target is an indicator of a ball of some radius $r$. It corresponds to two-class classification task with first class located in $|\mathbf{x}|<r$ and the second class in $|\mathbf{x}|>r$. To calculate coefficient $C$ we sampled points uniformly on a sphere $|\mathbf{x}|=r$ and used them to calculate a Monte Carlo estimate of the integral in \eqref{eq:ltdisc0}. In principle, one can choose classes with more sophisticated separation boundary, but numerical calculation of the integral in \eqref{eq:ltdisc0} will be more complicated. Note also that even if the target has a spherical symmetry, the whole problem does not, because we use a non-symmetric $\mu$.

\newpage
\section{Derivation of the loss asymptotic from the asymptotics of the eigenvalues $\lambda_n$ and the expansion coefficients $c_n$}\label{sec:genlossformula}

In this section we prove the loss asymptotic \eqref{eq:loss_asymp}. This result is established under assumption of power law asymptotics \eqref{lambda asym}, \eqref{tail asym} for the eigenvalues $\lambda_n$ and partial sums of coefficients $s_n=\sum_{k \geq n} |c_k|^2$, i.e.
\begin{equation}\label{eq:base_asym}
    \begin{split}
        &\lambda_n \sim \Lambda n^{-\nu}, \\
        &s_n \sim K n^{-\kappa}.
    \end{split}
\end{equation}
Here the asymptotic similarity sign $\sim$ denotes $a_n\sim b_n \iff a_n=b_n(1 + o(1))$.

\begin{ther}
Under the assumptions \eqref{eq:base_asym} on the asymptotic of eigenvalues ad coefficients, the loss $L(t)=\tfrac{1}{2}\sum_n e^{-2\lambda_n t}|c_n|^2$ has the asymptotic
\begin{equation}
\label{eq:loss_asym}
    L(t) \sim \frac{K}{2} \Gamma\left(\frac{\kappa}{\nu} + 1\right) \left( 2 \Lambda  t\right)^{-\frac{\kappa}{\nu}}
\end{equation}

\end{ther}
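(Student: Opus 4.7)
The plan is to rewrite the loss via Abel summation (summation by parts) in terms of the partial sums $s_n$, pass to a continuous integral, and extract a Gamma function by a change of variables that identifies $v = 2\Lambda u^{-\nu}t$ as the natural rescaling. Since $|c_n|^2 = s_n - s_{n+1}$, summation by parts yields
\begin{equation*}
  L(t) = \tfrac{1}{2}e^{-2\lambda_0 t}s_0 + \tfrac{1}{2}\sum_{n=1}^\infty \bigl(e^{-2\lambda_n t} - e^{-2\lambda_{n-1} t}\bigr)s_n,
\end{equation*}
where the boundary term at infinity vanishes because $s_n\to 0$, and the $e^{-2\lambda_0 t}s_0$ term decays exponentially in $t$ and is negligible compared to the power law we are after.

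Viewing $\lambda_n$ and $s_n$ as samples at integer $n$ of the smooth functions $\lambda(u) = \Lambda u^{-\nu}$ and $s(u) = K u^{-\kappa}$, I replace the finite difference by the derivative $\frac{d}{du}e^{-2\lambda(u)t} = 2\nu\Lambda t\, u^{-\nu-1}e^{-2\Lambda u^{-\nu}t}$ to obtain
\begin{equation*}
  L(t) \sim \frac{K}{2}\int_0^\infty 2\nu\Lambda t\, u^{-\nu-1-\kappa}\,e^{-2\Lambda u^{-\nu}t}\,du.
\end{equation*}
The substitution $v = 2\Lambda u^{-\nu}t$ then gives $2\nu\Lambda t\, u^{-\nu-1}du = -dv$ and $u^{-\kappa} = (2\Lambda t)^{-\kappa/\nu}v^{\kappa/\nu}$, collapsing the integral to $(2\Lambda t)^{-\kappa/\nu}\int_0^\infty e^{-v}v^{\kappa/\nu}\,dv = (2\Lambda t)^{-\kappa/\nu}\Gamma(\kappa/\nu + 1)$, which is exactly \eqref{eq:loss_asym}.

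The main obstacle is the rigorous justification of the continuous approximation: both the Riemann-sum error and the error from substituting the asymptotics in place of the exact $\lambda_n, s_n$ must be shown to be $o(1)$ relative to the leading term. The key observation is that the dominant range of summation is $n$ of order $n_\ast \equiv (2\Lambda t)^{1/\nu}$, where $\lambda_n t$ is of order unity; this scale tends to infinity with $t$, so the power-law asymptotics \eqref{eq:base_asym} apply uniformly on the bulk. For $n \ll n_\ast$ the exponential factor $e^{-2\lambda_n t}$ kills the contribution, while for $n \gg n_\ast$ the smallness of $s_n$ does. A standard splitting into three ranges $n < \varepsilon n_\ast$, $\varepsilon n_\ast \le n \le \varepsilon^{-1} n_\ast$, $n > \varepsilon^{-1} n_\ast$, combined with uniform bounds on the tails and a diagonal argument (letting $t\to\infty$ first and then $\varepsilon\to 0$), converts the heuristic derivation above into a rigorous proof.
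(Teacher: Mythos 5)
Your proposal is correct and takes essentially the same route as the paper: both exploit $|c_n|^2=s_n-s_{n+1}$, approximate the sum by an integral, and identify the Gamma function via the change of variables $v=2\Lambda u^{-\nu}t$. The only real difference is in the error-control scheme: where you propose a symmetric $\varepsilon$-split around the critical scale $n_*=(2\Lambda t)^{1/\nu}$ followed by a diagonal limit in $\varepsilon$, the paper instead cuts once at $n_t=\lfloor t^\beta\rfloor$ with $\beta$ chosen in $(\tfrac{1}{\nu+1},\tfrac{1}{\nu})$ (hence $n_t\ll n_*$); below $n_t$ the sum is killed by $e^{-t^{1-\beta\nu}}$, and above $n_t$ the Riemann-sum errors are controlled by explicit $O(t^{1-\beta(\nu+1)})$, $O(t^{-\beta})$ terms, so the paper's version produces quantitative remainder bounds rather than the qualitative $o(1)$ that your diagonal argument yields. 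Your Abel summation (pushing the difference onto the exponential rather than onto $s_n$) is a cosmetic rearrangement of the same identity, and the paper actually performs exactly that regrouping when it handles the $v_n$ correction term, so the two proofs are structurally parallel.
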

\begin{proof}

The constant $\Lambda$ enters the loss only in combination with $t$. Thus, by rescaling time and noticing that the loss is proportional to $K$, it is sufficient to consider the case $K=2$ and $2\Lambda=1$. In other words, we have to prove
\begin{equation}
    \sum\limits_{n=0}^{\infty} e^{-tn^{-\nu}(1+u_n)} (s_n-s_{n+1}) \sim \Gamma\left(\frac{\kappa}{\nu} + 1\right) t^{-\frac{\kappa}{\nu}}, \qquad \text{with} \quad s_n = n^{-\kappa}(1+v_n) 
\end{equation}

Here $\lim_{n\rightarrow\infty} u_n = \lim_{n\rightarrow\infty} v_n = 0$ due to asymptotic \eqref{eq:base_asym}. 

The idea of the proof is that in the region of small $n$ the sum can be neglected due to exponential factor $e^{-t\#}$, while in the region of large $n$ the sum can be replaced by the integral, since the sum argument slowly depend on $n$ in this region. In fact both regions greatly overlap, and we can find a common point $n_t$ inside both regions. Such common point can be taken as $n_t = \lf t^\beta \rf$ with any $\beta$ from the interval $(\tfrac{1}{\nu+1},\tfrac{1}{\nu})$ (the reason will be seen later). Let's denote
\begin{equation}
    u_t \equiv \sup\limits_{n \geq n_t} |u_n|, \qquad v_t \equiv \sup\limits_{n \geq n_t} |v_n|
\end{equation}
Since $u_n, v_n \rightarrow 0$ at $n\rightarrow \infty$ and $n_t \rightarrow \infty$ at $t\rightarrow \infty$, we have $u_t, v_t \rightarrow 0$ at $t\rightarrow \infty$. The strategy of the proof is first to bound the sum for $n \leq n_t$, then calculate the sum for $n>n_t$ with $u_n=v_n=0$, and finally add corrections from $u_n$ and $v_n$. 

The sum over $n \leq n_t$ is bounded as
\begin{equation}
\begin{split}
\sum\limits_{n=0}^{n_t} e^{-2t\lambda_n} (s_n-s_{n+1}) \leq e^{-2t\lambda_{n_t}} \sum\limits_{n=0}^{n_t} (s_n-s_{n+1}) \\ 
\leq \exp\left(-t n_t^{-\nu}(1-u_t)\right) s_0 \leq \exp\left(-t^{1-\beta \nu}(1-u_t)\right) s_0
\end{split}
\end{equation}
Since $\beta < \tfrac{1}{\nu}$ we have $1-\beta \nu > 0$ and the sum goes to $0$ exponentially fast as $t\rightarrow \infty$. 

Now we calculate the sum over $n > n_t$ with $u_n=v_n=0$. Due to convexity of $f(x)=x^{-\alpha}$ for all $\alpha>0$ we have the bounds  
\begin{equation}\label{eq:bounds_1}
    \kappa (n+1)^{-\kappa-1} \leq s_n - s_{n+1} \leq \kappa n^{-\kappa-1}
\end{equation}
Then we approximate the sum with the integral as
\begin{equation}\label{eq:int_calc}
\begin{split}
&\sum\limits_{n>n_t}e^{-tn^{-\nu}}\big( n^{-\kappa} - (n+1)^{-\kappa}\big) \stackrel{(1)}{=} \int\limits_{n_t}^{\infty} e^{-tx^{-\nu}} \big(1 + O(t x^{-\nu-1})\big) \kappa x^{-\kappa-1}\big(1 + O(x^{-1})\big) dx \\
& \stackrel{(2)}{=} \int\limits_0^{t n_t^{-\nu}} e^{-z} \kappa \left(\frac{t}{z}\right)^{-\tfrac{\kappa+1}{\nu}} \frac{1}{t \nu} \left(\frac{t}{z}\right)^{1+\tfrac{1}{\nu}} dz\Big(1 + O(tn_t^{-\nu-1}) + O(n_t^{-1})\Big) \\
& = \frac{\kappa}{\nu} t^{-\tfrac{\kappa}{\nu}}\left(\int\limits_0^{\infty} z^{\tfrac{\kappa}{\nu}-1} e^{-z} dz + O\Big(\exp\big(-t n_t^{-\nu}\big)\big[t n_t^{-\nu}\big]^{\tfrac{\kappa}{\nu}-1}\Big) + O(tn_t^{-\nu-1}) + O(n_t^{-1}) \right)\\
&\stackrel{(3)}{=} \Gamma\Big(\frac{\kappa}{\nu}+1\Big)t^{-\tfrac{\kappa}{\nu}}\Big(1+O\Big(\exp\big(-t^{1-\beta\nu}\big)\big[t^{(1-\beta\nu)}\big]^{\tfrac{\kappa}{\nu}-1}\Big)+O\big(t^{1-\beta(\nu+1)}\big)+O\big(t^{-\beta}\big)\Big)
\end{split}
\end{equation}
Here in (1) we used \eqref{eq:bounds_1} to estimate the difference $s_n-s_{n+1}$, and then first order Taylor expansion to estimate value of integrated function at non-integer points. In (2) we made a change of variables $z=tx^{-\nu}$ and estimated "big O" terms using minimum value $x_{\text{min}}=n_t$. In (3) used the definition of Gamma function and recursive relation $z\Gamma(z)=\Gamma(z+1)$, and substituted $n_t$. $\beta \in (\tfrac{1}{\nu+1}, \tfrac{1}{\nu})$ implies that $1-\beta\nu>0$ and $1-\beta(\nu+1)<0$, therefore all "big O" terms go to $0$ as $t\rightarrow \infty$.

The last step is to include back $u_n$ and $v_n$ into sum over $n>n_t$. To include $u_n$ we use that $n^{-\nu}(1 - u_t)\leq 2\lambda_n \leq n^{-\nu}(1 + u_t)$. Then we make lower/upper bounds for the sum with $u_n$ by using the result of calculation \eqref{eq:int_calc} with substitution $t\rightarrow t(1 \pm u_t)$. Finally, we bound contribution from $v_n$ as
\begin{equation}
\begin{split}
    &\sum\limits_{n>n_t} e^{-2t\lambda_n}(v_n n^{-\kappa}-v_{n+1}(n+1)^{-\kappa}) \\
    &\stackrel{(1)}{=} \sum\limits_{n>n_t} \big(e^{-2t\lambda_n} - e^{-2t\lambda_{n-1}}\big) v_n n^{-\kappa} + e^{-2t\lambda_{n_t}} v_{n_t+1} (n_t+1)^{-\kappa} \\
    & \stackrel{(2)}{\leq} v_t \sum\limits_{n>n_t} e^{-2t\lambda_n}( n^{-\kappa}-(n+1)^{-\kappa}) + e^{-2t\lambda_{n_t}} (v_{n_t+1} - v_t) (n_t+1)^{-\kappa} 
\end{split}
\end{equation}
Here in (1) we regrouped the terms in the sum, while in (2) we used $v_n \stackrel{n>n_t}{\leq} v_t$ and regrouped summation terms back into original form. The second term in the last line is negative and the first term is $v_t$ times the result of \eqref{eq:int_calc}. The final expression with all contributions is
\begin{equation}
    \begin{split}
        &\sum\limits_{n=0}^{\infty} e^{-2t\lambda_n} (s_n-s_{n+1}) = \Gamma\Big(\frac{\kappa}{\nu}+1\Big)t^{-\tfrac{\kappa}{\nu}} \times \\
        & \times \Big(1+O\Big(\exp\big(-t^{1-\beta\nu}\big)\big[t^{(1-\beta\nu)}\big]^{\max(\tfrac{\kappa}{\nu}-1,0)}\Big)+O\big(t^{1-\beta(\nu+1)}\big) + O(u_t) + O(v_t)\Big)
    \end{split}
\end{equation}
Here all "big O" terms vanish in the limit $t\rightarrow \infty$ and we obtain desired answer.
\end{proof}

\newpage
\section{Calculation of $\gamma_\mathbf{x}$}\label{sec:gammax}
According to formula \eqref{eq:nu1ad} from the main text, to calculate the coefficient $\Lambda$ in the eigenvalue asymptotic one needs to find the volume $\gamma_\mathbf{x}$ of the region defined using the Fourier transform $\widetilde{\theta}_\mathbf{x}(\mathbf{k})$ of the homogeneous singularity $\theta_\mathbf{x}(\mathbf{z})$. These quantities are defined as 
\begin{align}
    &\label{eq:Fourier_def}
    \widetilde \theta_{\mathbf x}(\mathbf k)=\lim_{r\to\infty}\int_{|\mathbf x|<r}(1-\tfrac{|\mathbf x|^2}{r^2})^c \theta(\mathbf z)e^{-i\mathbf k\cdot\mathbf z}d\mathbf z \\
    &\label{eq:gamma_def}
    \gamma_{\mathbf x}=(2\pi)^{-d}|\{\mathbf k\in\mathbb R^d: \widetilde \theta_{\mathbf x}(\mathbf k)>1\}|        
\end{align}
In this section we calculate $\gamma_x$ in the case of singularities $\theta_\mathbf{x}(\mathbf{z})=|\mathbf{z}|^\alpha$ and $\theta_\mathbf{x}(\mathbf{z})=\varphi^\alpha(\mathbf{x},\mathbf{x}+\mathbf{z})$ with $\varphi(\mathbf{x},\mathbf{x'})$ defined as in the main text. The latter case is needed for obtaining coefficients $\Lambda$ in Eqs.~\eqref{eq:Lambda} and \eqref{eq:lambdaq}.

\paragraph{Case $\theta_\mathbf{x}(\mathbf{z})=|\mathbf{z}|^\alpha$.} We drop index $\mathbf{x}$ since there is no dependence on it. The homogeneity of $\theta(\mathbf{x})$ with degree $\alpha$ implies the homogeneity of $\widetilde{\theta}(\mathbf{k})$ with degree $-\alpha-d$. This can be seen from definition \eqref{eq:Fourier_def} by making integration variable change $\mathbf{x}\rightarrow c\mathbf{x}, \; c>0$. Then, due to spherical symmetry of $\theta(\mathbf{z})=|\mathbf{z}|^\alpha$, its Fourier transform has a form 
\begin{equation}
    \widetilde{\theta}(\mathbf{k}) = c_{d,\alpha} |\mathbf{k}|^{-d-\alpha}
\end{equation}
To determine the coefficient $c_{d,\alpha}$ we take into account that both $\theta(\mathbf{z})$ and $\widetilde{\theta}(\mathbf{k})$ are generalized functions acting on some test functions $\chi$. We denote the Fourier transform by $F$ and action of generalized functions on test functions by $\langle \cdot,\cdot\rangle$ (not to be confused with averaging in Eqs.~\eqref{eq:Lambda} and \eqref{eq:lambdaq}). Then by taking a test function $\chi(\mathbf{k})=e^{-|\mathbf{k}|^2/2}$ and its Fourier transform $\widetilde{\chi}(\mathbf{z})=(2\pi)^{d/2} e^{-|\mathbf{z}|^2/2}$ we get
\begin{align}
\label{eq:action_1}
\begin{split}
    \langle F(\theta),\chi \rangle &= c_{d,\alpha} \int |\mathbf{k}|^{-d-\alpha} e^{-\tfrac{|\mathbf{k}|^2}{2}} d\mathbf{k}= c_{d,\alpha} S_{d-1} \int\limits_0^\infty k^{-1-\alpha} e^{-\tfrac{k^2}{2}} dk \\
    & = c_{d,\alpha} S_{d-1} 2^{-\tfrac{2+\alpha}{2}} \int\limits_0^{\infty} e^{-x} x^{-\tfrac{2+\alpha}{2}} dx = c_{d,\alpha} S_{d-1} 2^{-\tfrac{2+\alpha}{2}} \Gamma\Big(-\frac{\alpha}{2}\Big) 
\end{split} \\
\label{eq:action_2}
\begin{split}
    \langle \theta,F(\chi) \rangle &= (2\pi)^{\tfrac{d}{2}}\int |\mathbf{z}|^{\alpha} e^{-\tfrac{|\mathbf{z}|^2}{2}} d\mathbf{z}= (2\pi)^{\tfrac{d}{2}} S_{d-1} \int\limits_0^\infty z^{d-1+\alpha} e^{-\tfrac{z^2}{2}} dz \\
    & = (2\pi)^{\tfrac{d}{2}} S_{d-1} 2^{\tfrac{d-2+\alpha}{2}} \int\limits_0^{\infty} e^{-x} x^{\tfrac{d-2+\alpha}{2}} dx = (2\pi)^{\tfrac{d}{2}} S_{d-1} 2^{\tfrac{d-2+\alpha}{2}} \Gamma\Big(\frac{d+\alpha}{2}\Big) 
\end{split}
\end{align}
Here in both calculations we first integrated over $d-1$-dimensional sphere with the area $S_{d-1}$. Then we changed variables so that the integral can be expressed in terms of Gamma function. Since $\langle F(\theta),\chi \rangle=\langle \theta,F(\chi) \rangle$ we compare expressions \eqref{eq:action_1} and \eqref{eq:action_2} and find
\begin{equation}
    c_{d,\alpha} = 2^{d+\alpha} \pi^{\tfrac{d}{2}} \frac{\Gamma\big( \tfrac{d+\alpha}{2}\big)}{\Gamma\big( -\tfrac{\alpha}{2}\big)}
\end{equation}
To find $\gamma$ we notice that the volume in \eqref{eq:gamma_def} is a ball with radius $c_{d,\alpha}^{1/(d+\alpha)}$. By using a volume of a unit ball in d-dimensional space $B_d = \pi^{d/2} / \Gamma\big(\tfrac{d}{2}+1\big)$ we get
\begin{equation}
    \gamma = \frac{1}{\Gamma\big(\tfrac{d}{2}+1\big)} \left[ \frac{\Gamma\big(\tfrac{d+\alpha}{2}\big)}{\pi^{\tfrac{\alpha}{2}}\Big|\Gamma\big(-\tfrac{\alpha}{2}\big)\Big|}\right]^{\tfrac{d}{d+\alpha}} \equiv \gamma_{d,\alpha}
\end{equation}

\paragraph{Case $\theta_\mathbf{x}(\mathbf{z})=\varphi^\alpha(\mathbf{x},\mathbf{x}+\mathbf{z})$.} To see that angle $\varphi(\mathbf{x},\mathbf{x'})$ has a singularity at $\mathbf{x}=\mathbf{x'}$ we write the scalar product of $\widetilde{\mathbf{x}},\widetilde{\mathbf{x}}'$ as 
\begin{equation} \label{eq:phi1_def}
    \cos \varphi(\mathbf{x},\mathbf{x}') \sqrt{r(\mathbf{x})r(\mathbf{x'})}=\sigma_w^2 \mathbf{x} \cdot \mathbf{x'} + \sigma_b^2 
\end{equation}
with $r^2(\mathbf{x})=\sigma_w^2|\mathbf{x}|^2+\sigma_b^2$ as in the paper. Expanding this expression at $\varphi=0$ and $\delta \mathbf{x} = \mathbf{x}-\mathbf{x}'=0$ we get
\begin{equation}
\label{eq:local_angle}
    \varphi(\mathbf{x},\mathbf{x}') = \sqrt{1 -\frac{\sigma_w^2|\mathbf{x}|^2}{r^2(\mathbf{x})}\cos^2\psi} \; \frac{\sigma_w |\delta \mathbf{x}|}{r(\mathbf{x})} + O\big(|\delta \mathbf{x}|^2\big), 
\end{equation}
which is the expression \eqref{local angle} from the main text. In the asymptotic analysis we need to consider only the leading singular term, therefore the homogeneous singularity has the form
\begin{equation}\label{eq:angle_sing}
    \theta_\mathbf{x}(\mathbf{z}) = a(\mathbf{x})\left(\sqrt{1-b(\mathbf{x}) \frac{z_1^2}{|\mathbf{z}|^2}} \; |\mathbf{z}|\right)^\alpha
\end{equation}
Here $a(\mathbf{x})=(\sigma_w /r(\mathbf{x}))^\alpha$ and $b(\mathbf{x})=\sigma^2_w |\mathbf{x}|^2/r^2(\mathbf{x})$ are introduced for convenience and we also omit $\mathbf{x}$ for the rest of this section. In \eqref{eq:angle_sing} we oriented basis in $\mathbf{z}$ space so that the first axis is parallel to vector $\mathbf{x}$. Now we calculate Fourier transform 
\begin{equation}
\begin{split}
    \widetilde{\theta}(\mathbf{k}) &= \int \theta(\mathbf{k})e^{-i\mathbf{k}\cdot \mathbf{z}} d\mathbf{z} = a \int \Big[(1-b)z_1^2 +z_2^2 + \ldots + z_d^2\Big]^{\frac{\alpha}{2}}e^{-i\mathbf{k}\cdot \mathbf{z}} d\mathbf{z} \\
    & \stackrel{(1)}{=} \frac{a}{\sqrt{1-b}} \int |\mathbf{z}'|^\alpha e^{-i\mathbf{k}'\cdot \mathbf{z}'} d\mathbf{z}' = \frac{a}{\sqrt{1-b}} c_{d,\alpha} |\mathbf{k}'|^{-d-\alpha}
\end{split}
\end{equation}
Here in (1) we changed to variables $\mathbf{z}',\mathbf{k}'$ which are the same as $\mathbf{z},\mathbf{k}$ except the first dimension: $z_1' = z_1\sqrt{1-b}$ and $k_1' = k_1 / \sqrt{1-b}$. In the original $\mathbf{k}$ space the equation $c|\mathbf{k'}|^{-d-\alpha}=1$ defines an ellipsoid obtained from the sphere $c|\mathbf{k}|^{-d-\alpha}=1$ by squeezing the first axis by the factor $\sqrt{1-b}$. This gives us the formula for volume $\gamma$
\begin{equation}
    \gamma = (2\pi)^{-d} B_d \left(\frac{c_{d,\alpha} a}{\sqrt{1-b}}\right)^{\tfrac{d}{d+\alpha}} \sqrt{1-b} = \gamma_{d,\alpha} a^{\tfrac{d}{d+\alpha}} \Big(\sqrt{1-b}\Big)^{\tfrac{\alpha}{d+\alpha}} 
\end{equation}
Restoring $\mathbf{x}$ dependence and using $\sqrt{1-b(\mathbf{x})}=\sigma_b/r(\mathbf{x})$ we get
\begin{equation}\label{eq:gamma_alpha}
    \gamma_{\mathbf{x}} = \gamma_{d,\alpha} \sigma_w^{\tfrac{\alpha d}{d+\alpha}} \sigma_b^{\tfrac{\alpha}{d+\alpha}} r(\mathbf{x})^{-\tfrac{\alpha d+\alpha}{d+\alpha}}
\end{equation}

\paragraph{Case $\theta_\mathbf{x}(\mathbf{z})=A(\mathbf{x}) \varphi^\alpha(\mathbf{x},\mathbf{x}+\mathbf{z})$.} 
This case includes the singularities of shallow network NTK \eqref{eq:NTK_sing} and covariance \eqref{eq:Sigma_sing}, and can be easily obtained from the previous one. Since Fourier transformation \eqref{eq:Fourier_def} and volume calculation \eqref{eq:gamma_def} are performed locally, $A(\mathbf{x})$ is effectively constant in these calculations. Thus $\gamma_\mathbf{x}$ from \eqref{eq:gamma_alpha} is simply multiplied by $|A(\mathbf{x})|^{\tfrac{d}{d+\alpha}}$. The result is
\begin{equation}\label{eq:gamma_alpha2}
    \gamma_{\mathbf{x}} = |A(\mathbf{x})|^{\tfrac{d}{d+\alpha}} \gamma_{d,\alpha} \sigma_w^{\tfrac{\alpha d}{d+\alpha}} \sigma_b^{\tfrac{\alpha}{d+\alpha}} r(\mathbf{x})^{-\tfrac{\alpha d+\alpha}{d+\alpha}}
\end{equation} 

Equation \eqref{eq:Lambda} can be obtained by using $A(\mathbf{x})=-\frac{\sigma_w^2 r(\mathbf{x})^2}{2\pi}$ from \eqref{eq:NTK_sing} and then substituting resulting $\gamma_\mathbf{x}$ in \eqref{eq:nu1ad}.

\newpage
\section{Derivation of the loss asymptotic for singular evolution operators $\ta$ and specific classes of target functions $g$}\label{sec:SM_coeffs}
This section expands the content of Section \ref{sec:coeff} of the main text. Our goal is to derive the explicit leading terms in the asymptotic of the loss $L(t)=\tfrac{1}{2}\|g_t\|^2$ for the evolution $g_t=e^{-t\ta}g$ when $g$ belongs to one of the following two classes:
\begin{enumerate}
    \item  The function $g$ is supported on a bounded subset $\Omega$ with a smooth boundary $\partial \Omega$ so that $g$ has a discontinuity on this boundary but is smooth inside  $\Omega$. An obvious example of $g$ is the indicator function of $\Omega.$
    \item The function $g$ is a realization of a Gaussian process with a particular singular covariance.
\end{enumerate}

Before providing these derivations, let us first recall our general setting. We consider the evolution 
$g_t=e^{-t\ta}g$
governed by the non-negative definite generator   
\begin{equation}\label{eq:wtadf}\widetilde{\mathcal A}g(\mathbf x)=\int_{\mathbb R^d}\mu^{1/2}(\mathbf x)\Theta(\mathbf x,\mathbf x')\mu^{1/2}(\mathbf x')g(\mathbf x')d\mathbf x'\end{equation}
(see Section \ref{sec:loss_asym} of the main text). Here, $\mu$ is the measure corresponding to the data distribution, and $\Theta$ is the kernel associated with the neural network ansatz. This symmetric form of the evolution operator is valid in the representation in which the original functions are multiplied by $\mu^{1/2}$. Accordingly, the function $g$ appearing in the loss formula $L(t)=\tfrac{1}{2}\|e^{-t\ta}g\|^2$ is given by
$$g(\mathbf x)=\mu^{1/2}(\mathbf x)(\widetilde f(\mathbf w(t=0),\mathbf x)-f(\mathbf x)),$$
where $f$ is the function to be fitted by the network, and $\widetilde f(\mathbf w(t=0),\mathbf x)$ is the initial network approximation.\footnote{We remark in passing that we can usually control the initial approximation $\widetilde f$, and in some cases we can ensure that the contribution of  $\mu^{1/2}\widetilde f$ to $g$ is small compared to $\mu^{1/2}f$. In such cases,  one can assume $g\approx -\mu^{1/2}f$.}

In scenario 1 above  -- a function $g$ supported on a domain $\Omega$ and discontinuous on the boundary $\partial\Omega$ -- we will show that the large-$t$ asymptotic of the loss is given by
\begin{align}\label{eq:ltdisc}
L(t)\sim{}&\frac{1}{2\pi}\Gamma\left(\frac{1}{d+\alpha} + 1\right)\int_{\partial\Omega}|\Delta g(\mathbf x)|^2 (\mu(\mathbf x)\widetilde{\theta}_{\mathbf x}(\mathbf n))^{-\frac{1}{d+\alpha}}dS \cdot (2t)^{-\frac{1}{d+\alpha}},
\end{align}
where integration is performed over the boundary, $\mathbf x\in\partial\Omega$ is the respective boundary point, $\mathbf n$ is the respective unit normal to the boundary, $\Delta g(\mathbf x)$ is the value of discontinuity of $g$ at $\mathbf x,$ and $\widetilde{\theta}_{\mathbf x}$ is the Fourier transform of the homogeneous singularity of the kernel $\Theta$ at $\mathbf x=\mathbf x'$.

In scenario 2 -- a function $g$ generated by a Gaussian process with a homogeneous diagonal singularity of degree $\beta$ -- we will show that the large-$t$ asymptotic of the loss is given by
\begin{align}\label{eq:ltgauss}
    L(t)
    \sim{}&\frac{1}{2(2\pi)^d\beta}\Gamma\Big(\frac{\beta}{d+\alpha}+1\Big)\int_{\mathbb R^{d}}\int_{|\mathbf n|=1}\widetilde{\zeta}_{\mathbf x}(\mathbf n)(\mu(\mathbf x)\widetilde{\theta}_{\mathbf x}(\mathbf n))^{-\frac{\beta}{d+\alpha}}d\mathbf xdS\cdot  (2t)^{-\frac{\beta}{d+\alpha}},
\end{align}
where $\widetilde{\zeta}_{\mathbf x}$ is the Fourier transform of the  diagonal singularity.

The general approach in obtaining these power law asymptotics is to expand $g$ over the approximate, spatially localized eigenvectors of the evolution operator $\ta$. One can consider two slightly different versions of this approach. In one version we first find the asymptotic \eqref{tail asym} of the cumulative distribution of the expansion coefficients, and then find the asymptotic of the loss using formula \eqref{eq:loss_asymp} of the main text. In the other version, we bypass the computation of Eq.~ \eqref{tail asym}, and find the asymptotic of $L(t)$ directly. (In fact, this version also uses the asymptotic relation \eqref{eq:loss_asymp} of the main text, but applies it not to the full set of eigenvalues, but rather separately to the localized eigenvector expansion at each point $\mathbf x$ of the domain). We find the latter approach to be somewhat more direct and efficient.

Accordingly, our derivations will be structured as follows. In Section \ref{sec:coeff_general} we discuss the general ideas of localization and high-frequency asymptotics. In Section \ref{sec:discont_loss} we give a direct derivation for the loss asymptotic in the case of the first (discontinuous) class $g$. In Section \ref{sec:gaussian_loss} we give a direct derivation for the loss asymptotic in the case of the second (Gaussian) class $g$. Then, in Section \ref{sec:q} we sketch the derivation of the coefficient asymptotic \eqref{tail asym} for both classes of $g$.

\subsection{General considerations}\label{sec:coeff_general}

Recall from Section \ref{sec:singular} and \ref{sec:eigvalues_asym_base} of the main text that the kernel $\Theta$ has a diagonal singularity:
\begin{equation}\label{eq:theta1}\Theta(\mathbf x,\mathbf x')=\theta_{\mathbf x}(\mathbf{x}-\mathbf{x}')+\ldots,\end{equation}
where $\theta_{\mathbf x}(\cdot)$ is a (possibly $\mathbf x$-dependent) even ($\theta_{\mathbf x}(\mathbf z)=\theta_{\mathbf x}(-\mathbf z)$) homogeneous function of degree $\alpha$:
\begin{equation}\label{eq:theta2}\theta_{\mathbf x}(c\mathbf z)=|c|^\alpha\theta_{\mathbf x}(\mathbf z).\end{equation}
It will be convenient to also consider the function $\psi_{\mathbf x}$ obtained by rescaling $\theta_{\mathbf x}$ by the coefficient $\mu(\mathbf x)$:
$$\psi_{\mathbf x}=\mu(\mathbf x)\theta_{\mathbf x}.$$
We denote by $\widetilde{\theta}_{\mathbf x}$ and $\widetilde{\psi}_{\mathbf x}$ the versions of the Fourier transforms of $\theta_{\mathbf x},\psi_{\mathbf x}$ defined as in Eq.~\eqref{eq:sing_Fourier} of the main text. Note that these functions are homogeneous with degree $-(d+\alpha).$

To analyze the evolution $g_t=e^{-t\ta}g$, we use the short-time Fourier transform (STFT) of $g$:  
\begin{align}
    F(\mathbf y, \mathbf k)={}&(2\pi)^{-d/2}\int_{\mathbb R^d} g(\mathbf x)\omega(\mathbf x-\mathbf y)e^{-i\mathbf k\cdot\mathbf x}d\mathbf x,\label{eq:fyk}\\
    g(\mathbf x)={}&(2\pi)^{-d/2}\int_{\mathbb R^d}\int_{\mathbb R^d} F(\mathbf y,\mathbf k)\omega(\mathbf x-\mathbf y)e^{i\mathbf k\cdot\mathbf x}d\mathbf yd\mathbf k.\label{eq:stft2}
\end{align}
Here $\omega$ is an even real smooth and compactly supported function such that $\int\omega^2=1$.
Roughly speaking, the coefficient $F(\mathbf y, \mathbf k)$ describes the component of $g$ having wave number $\mathbf k$ and localized at the point $\mathbf y.$

The stationary phase method (or its variant described in \cite{Birman_1970}) shows that if $f$ is a fixed function, then the leading term in the asymptotic of the action of the operator $\ta$ on the high-frequency function $f(\mathbf x)e^{i\mathbf k\cdot\mathbf x}$ (with $\mathbf k\to\infty$) can be written in terms of the Fourier transform of the diagonal singularity of the kernel:
\begin{align*}\int_{\mathbf R^d}\mu^{1/2}(\mathbf x)\Theta(\mathbf x,\mathbf x')\mu^{1/2}(\mathbf x')f(\mathbf x')e^{i\mathbf k\cdot\mathbf x'}d\mathbf x'\sim \widetilde{\psi}_{\mathbf x}(\mathbf k)f(\mathbf x)e^{i\mathbf k\cdot\mathbf x}. \end{align*}
This shows that for large $\mathbf k,$ we can think of the functions $f(\mathbf x)e^{i\mathbf k\cdot\mathbf x}$ as of approximate eigenvectors of the operator $\ta$ and accordingly also of the evolution $e^{-t\ta}.$ Then we can write
\begin{align}
    g_t(\mathbf x)={}& e^{-t\ta}g(\mathbf x)\nonumber\\
    ={}&  (2\pi)^{-d/2}\int_{\mathbb R^d}\int_{\mathbb R^d} F(\mathbf y,\mathbf k)e^{-t\ta} [\omega(\mathbf x-\mathbf y)e^{i\mathbf k\cdot\mathbf x}]d\mathbf yd\mathbf k\nonumber\\
    \stackrel{|\mathbf k|\gg 1}{\sim}{}& (2\pi)^{-d/2}\int_{\mathbb R^d}\int_{\mathbb R^d} F(\mathbf y,\mathbf k) e^{-t\widetilde{\psi}_{\mathbf x}(\mathbf k)}\omega(\mathbf x-\mathbf y)e^{i\mathbf k\cdot\mathbf x}d\mathbf yd\mathbf k.\label{eq:gt2}
\end{align}
To justify the assumption $|\mathbf k|\gg 1,$ observe that the function $e^{-t\widetilde{\psi}_{\mathbf x}(|\mathbf k|)}$ is close to 0 for $t|\mathbf k|^{-(d+\alpha)}\gg 1$ (i.e. for $|\mathbf k|\ll t^{\frac{1}{d+\alpha}}$), and is close to 1 for $t|\mathbf k|^{-(d+\alpha)}\ll 1$ (i.e. for $|\mathbf k|\gg t^{\frac{1}{d+\alpha}}$), so that at large $t$ the integral over $\mathbf k$ is indeed determined by the large-$|\mathbf k|$ asymptotic of $F(\mathbf y,\mathbf k).$

Next, we consider separately the two classes of functions $g$.

\subsection{Scenario 1: a discontinuous $g$}\label{sec:discont_loss} 
Suppose that $g$ is supported and smooth on the domain $\Omega,$ and has a discontinuity $\Delta g$ at the boundary $\partial \Omega.$ Consider the coefficient $F(\mathbf y,\mathbf k)$ defined in Eq.~\eqref{eq:fyk}.
If $\mathbf y$ is such that the support of $\omega(\cdot-\mathbf y)$ does not intersect the boundary $\partial \Omega$, then $g(\mathbf x')\omega(\mathbf x'-\mathbf y)$ is a smooth function of $\mathbf x',$ and the coefficient $F(\mathbf y,\mathbf k)$ will fall off faster than any power of $|\mathbf k|$ as $|\mathbf k|\to \infty,$ so the contribution of such $\mathbf y$ to the expansion \eqref{eq:gt2} will be negligible at large $t$. Assuming that the support of $\omega$ is small, it means that only $\mathbf y$ belonging to a narrow neighborhood of the boundary $\partial \Omega$ will contribute to \eqref{eq:gt2}. Accordingly, the function $g_t(\mathbf x)$ will also fall off quickly away from the boundary.

Suppose now that $\mathbf y$ lies near the boundary and the support of $\omega(\cdot-\mathbf y)$ intersects $\partial \Omega$.  It is convenient to consider first the one-dimensional case $d=1$. 

\subparagraph{Case $d=1$.} In this case the large-$k$ asymptotic of the coefficients $F(y,k)$ will be determined by the discontinuity of $g(x')\omega(x'-y)$ at the boundary point $x'=x_0\in\partial \Omega$:
$$F(y,k)\sim (2\pi)^{-1/2}\frac{\Delta g(x_0)\omega(x_0-y)}{ik}e^{-kx_0}.$$
Now we substitute this into Eq.~\eqref{eq:gt2}:
$$g_t(x)=(2\pi)^{-1}\Delta g(x_0)\int_{\mathbb R}\int_{\mathbb R} \frac{e^{-t\widetilde{\psi}_{\mathbf x}(k)}}{ik}\omega(x_0-y)\omega( x-y)e^{ik(x-x_0)}dydk,$$
where $x_0$ is the point of $\partial\Omega$ near which the point $x$ is located\footnote{For $d=1$, we take $\Omega$ to be a finite union of intervals, so $\partial\Omega$ consists of finitely many points.}. Regarding the function $\omega(x-y)$, one observes by rescaling the variable $k$ that at large $t$,  only a small (size-$t^{-\frac{1}{d+\alpha}}$) neighborhood of the boundary points  will contribute to the integral, so we can write $\omega(x-y)\sim \omega(x_0-y)$ and integrate out $y$ using the formula $\int_{\mathbb R}\omega^2=1:$
\begin{align}
g_t(x)\sim{}&(2\pi)^{-1}{\Delta g(x_0)}\int_{\mathbb R}\int_{\mathbb R} \frac{e^{-t\widetilde{\psi}_{\mathbf x}(k)}}{ik}\omega^2(x_0-y)e^{ik(x-x_0)}dydk\nonumber\\
={}&(2\pi)^{-1}{\Delta g(x_0)}\int_{\mathbb R} \frac{e^{-t\widetilde{\psi}_{\mathbf x}(k)}}{ik}e^{ik(x-x_0)}dk.\label{eq:gtxik}
\end{align}
We now want to estimate the full squared norm $\|g_t\|^2=\int_{\mathbb R}|g_t(x)|^2dx.$ On the whole $\mathbb R$, the function $g_t$ is given by the sum of contributions from different points $x_0\in\partial\Omega$:
\begin{align}\label{eq:gtxsum}
g_t(x)\sim{}&\sum_{x_0\in\partial\Omega}(2\pi)^{-1}{\Delta g(x_0)}\int_{\mathbb R} \frac{e^{-t\widetilde{\psi}_{\mathbf x}(k)}}{ik}e^{ik(x-x_0)}dk.
\end{align}
Observe that the functions $u_{x_0}(k)=\frac{e^{-t\widetilde{\psi}_{\mathbf x}(k)}}{ik}e^{-ikx_0}$ with different $x_0\in\partial\Omega$ become orthogonal in the limit $t\to\infty$ (since the Fourier transforms of these functions are localized at size-$t^{-\frac{1}{d+\alpha}}$ neighborhoods of the respective boundary points $x_0$).
Then, by the unitarity of Fourier transform,
\begin{align*}
\|g_t\|^2\sim{}& \sum_{x_0\in\partial\Omega}(2\pi)^{-1}|\Delta g(x_0)|^2\int_{\mathbb R}\frac{e^{-2t\widetilde{\psi}_{\mathbf x}(k)}}{|k|^2}dk\\
={}&\sum_{x_0\in\partial\Omega}|\Delta g(x_0)|^2\frac{1}{\pi}\int_0^\infty \frac{\exp(-2\widetilde{\psi}_{\mathbf x}(1)tk^{-(d+\alpha)})}{|k|^2}dk.
\end{align*}
The $t\to\infty$ asymptotic of the integral has been determined in Section \ref{sec:genlossformula} (see also formula \eqref{eq:loss_asymp} in the main text): setting $\nu=d+\alpha, \kappa=1,\Lambda=\widetilde{\psi}_{\mathbf x}(1), K=1$, we get 
\begin{align*}
\|g_t\|^2\sim{}& \sum_{x_0\in\partial\Omega}|\Delta g(x_0)|^2\frac{1}{\pi}K\Gamma\left(\frac{\kappa}{\nu} + 1\right) \left( 2 \Lambda  t\right)^{-\frac{\kappa}{\nu}}\\
={}&\sum_{x_0\in\partial\Omega}|\Delta g(x_0)|^2\frac{1}{\pi}\Gamma\Big(\frac{1}{d+\alpha} + 1\Big)(2\widetilde{\psi}_{\mathbf x}(1)t)^{-\frac{1}{d+\alpha}}.
\end{align*}

\subparagraph{Case $d>1$.} Assuming that the support of $\omega$ is small enough, we can approximate the boundary $\partial\Omega$ locally, in the support of $\omega(\cdot-\mathbf y)$, by a linear hyperplane $\{\mathbf x: \mathbf n\cdot\mathbf x =x_0\}$, where $\mathbf n$ is the inward unit normal to $\partial\Omega$, so that the function $g(\cdot)\omega(\cdot-\mathbf y)$ can be represented as a product of the Heaviside step function in the direction $\mathbf n$ and the smooth function $\omega(\cdot-\mathbf y)$:
$$g(\mathbf x')\omega(\mathbf x'-\mathbf y)\sim \mathbf 1_{\mathbf n\cdot\mathbf x'\ge x_0}(\mathbf x')\omega(\mathbf x'-\mathbf y).$$
The coefficient $F(\mathbf y,\mathbf k)$ is the Fourier transform of this function w.r.t. $\mathbf x'$. The Fourier transform of $\mathbf 1_{\mathbf n\cdot\mathbf x'\ge x_0}$ is a distribution concentrated on the line $l_{\mathbf n}=\{\mathbf k: \mathbf k=u\mathbf n, u\in\mathbb R\}.$ The Fourier transform of $\mathbf 1_{\mathbf n\cdot\mathbf x'\ge x_0}(\mathbf x')\omega(\mathbf x'-\mathbf y)$ w.r.t. $\mathbf x'$ is the convolution of this line distribution with the Fourier transform of $\omega(\mathbf x'-\mathbf y).$ Then for given $\mathbf y,$ by the smoothness of $\omega$,  the coefficients $F(\mathbf y,\mathbf k)$ are concentrated in a neighborhood of the line $l_{\mathbf n}$ in the $\mathbf k$-space. Let $\mathbf k_{\parallel}$ denote the projection of vector $\mathbf k$ to this line. Since $\widetilde{\psi}_{\mathbf x}(\mathbf k)$ is a homogeneous function, for large $t$ we can write $\widetilde{\psi}_{\mathbf x}(\mathbf k)\approx \widetilde{\psi}_{\mathbf x}(\mathbf k_{\parallel})$ in the integral \eqref{eq:gt2}:
\begin{align*}
    g_t(\mathbf x)\sim  (2\pi)^{-d/2}\int_{\mathbb R^d}\int_{\mathbb R^d} F(\mathbf y,\mathbf k) e^{-t\widetilde{\psi}_{\mathbf x}(\mathbf k_{\parallel})}\omega(\mathbf x-\mathbf y)e^{i\mathbf k\cdot\mathbf x}d\mathbf yd\mathbf k.
\end{align*}
We can now decompose the wave number $\mathbf k$ and the vector $\mathbf x$ into components parallel and orthogonal to the normal $\mathbf n:$
$$\mathbf k=\mathbf k_{\parallel}+\mathbf k_{\bot}=k_{\parallel}\mathbf n+\mathbf k_{\bot},\quad \mathbf x=\mathbf x_{\parallel}+\mathbf x_{\bot}= x_{\parallel}\mathbf n+\mathbf x_{\bot},$$
and perform integration over the component $\mathbf k_\bot$ in the above formula after substituting the expression for $F(\mathbf y,\mathbf k)$:
\begin{align*}
    g_t(\mathbf x)\sim{}&  (2\pi)^{-d}\int_{\mathbb R^d}\int_{\mathbb R^d}\int_{\mathbb R^d}  g(\mathbf x')\mathbf 1_{\mathbf n\cdot\mathbf x'\ge x_0}(\mathbf x')\omega(\mathbf x'-\mathbf y) e^{-t\widetilde{\psi}_{\mathbf x}(\mathbf k_{\parallel})}\omega(\mathbf x-\mathbf y)e^{i\mathbf k\cdot(\mathbf x-\mathbf x')}d\mathbf yd\mathbf kd\mathbf x'\\
    ={}&(2\pi)^{-d+1}\int_{\mathbb R^d}\int_{\mathbb R}\int_{\mathbb R}g(\widetilde{\mathbf x}')\mathbf 1_{x_{\parallel}'\ge x_0}(x_\parallel')\omega(\widetilde{\mathbf x}'-\mathbf y)e^{-t\widetilde{\psi}_{\mathbf x}(\mathbf k_{\parallel})}\omega(\mathbf x-\mathbf y)e^{ik_{\parallel}(x_{\parallel}- x_{\parallel}')}d\mathbf yd k_{\parallel}d x_{\parallel}',
\end{align*}
where $\widetilde{\mathbf x}'=\mathbf x_{\bot}+x_{\parallel}'\mathbf n$. We can now proceed similarly to the previous case $d=1$. Specifically, let $\mathbf x_0=\mathbf x_{\bot}+x_0\mathbf n$ be the projection of the point $\mathbf x'$ to the surface $\partial \Omega$. At large $t$ and $k_{\parallel}$, we can approximate $g(\widetilde{\mathbf x}')\approx g({\mathbf x}_0),\widetilde{\mathbf x}'\approx \mathbf x_0, \mathbf x\approx \mathbf x_0,$ and integrate out $\mathbf y$ using $\int\omega^2(\mathbf x_0-\mathbf y)d\mathbf y=1:$
\begin{align*}
    g_t(\mathbf x)\sim{}&  (2\pi)^{-d+1}\Delta g(\mathbf x_0)\int_{\mathbb R}\int_{\mathbb R}  \mathbf 1_{x_{\parallel}'\ge x_0}(x_\parallel')e^{-t\widetilde{\psi}_{\mathbf x}(\mathbf k_{\parallel})}e^{ik_{\parallel}(x_{\parallel}- x_{\parallel}')}d k_{\parallel}d x_{\parallel}'\\
    \sim{}&(2\pi)^{-d}\Delta g(\mathbf x_0)\int_{\mathbb R}\frac{e^{-t\widetilde{\psi}_{\mathbf x}(\mathbf k_{\parallel})}}{ik_\parallel}e^{ik_\parallel (x_\parallel-x_0)}dk_\parallel.
\end{align*}
This expression is analogous to the expression \eqref{eq:gtxik}, and similarly to the case $d=1$, we can now use it to obtain the asymptotic of $\|g_t\|^2$. Recall that in the case $d=1$, for each $x$ near the boundary we considered its projection $x_0$ to the boundary, and obtained the full integral $\|g_t\|^2$ by summing the contributions from different points $x_0$ (see Eq.~\eqref{eq:gtxsum} and subsequent formulas). In the present case $d>1$, we replace this summation by integration over the boundary $\partial\Omega$. By repeating the same steps as before, we then get
\begin{align*}
\|g_t\|^2\sim{}& \frac{1}{\pi}\Gamma\left(\frac{1}{d+\alpha} + 1\right)\int_{\partial\Omega}|\Delta g(\mathbf x)|^2 (2\widetilde{\psi}_{\mathbf x}(\mathbf n)t)^{-\frac{1}{d+\alpha}}dS\\
={}&\frac{1}{\pi}\Gamma\left(\frac{1}{d+\alpha} + 1\right)\int_{\partial\Omega}|\Delta g(\mathbf x)|^2 (\mu(\mathbf x)\widetilde{\theta}_{\mathbf x}(\mathbf n))^{-\frac{1}{d+\alpha}}dS \cdot (2t)^{-\frac{1}{d+\alpha}},
\end{align*}
yielding the loss asymptotic \eqref{eq:ltdisc}.

\subsection{Scenario 2: $g$ generated by a Gaussian process}\label{sec:gaussian_loss}
Suppose now that $g$ is generated by a Gaussian process with covariance $\Sigma(\mathbf x,\mathbf x')=\langle g(\mathbf x)g(\mathbf x')\rangle$, and that $\Sigma$ has a homogeneous singularity of degree $\beta$ on the diagonal $\mathbf x=\mathbf x':$
$$\Sigma(\mathbf x,\mathbf x')=\zeta_{\mathbf x}(\mathbf x'-\mathbf x)+\ldots,$$
where the dots denote terms of a higher smoothness, and $\zeta_{\mathbf x}$ is an $\mathbf x$-dependent even homogeneous function of degree $\beta:$
$$\zeta_{\mathbf x}(c\mathbf z)=|c|^\beta \zeta_{\mathbf x}(\mathbf z).$$
Similarly to the previously considered homogeneous functions, we denote by $\widetilde{\zeta}_{\mathbf x'}$ the Fourier transform of ${\zeta}_{\mathbf x'}$ defined using Eq.~(26) of the main text. 

To analyze the asymptotic of $\|g_t\|^2,$ we use again the representation \eqref{eq:gt2} in which we substitute the expansion for $F(\mathbf y,\mathbf k)$: 
\begin{align}
    g_t(\mathbf x)\sim  (2\pi)^{-d}\int_{\mathbb R^d}\int_{\mathbb R^d}\int_{\mathbb R^d} g(\mathbf x')\omega(\mathbf x'-\mathbf y) e^{-t\widetilde{\psi}_{\mathbf x}(\mathbf k)}\omega(\mathbf x-\mathbf y)e^{i\mathbf k\cdot(\mathbf x-\mathbf x')}d\mathbf yd\mathbf kd\mathbf x'.
\end{align}

Using as before the argument with rescaling, we see that the leading contribution to this integral comes at large $\mathbf k$ and small $\mathbf x-\mathbf x'.$ In particular, we can  write $\omega(\mathbf x'-\mathbf y)\approx\omega(\mathbf x-\mathbf y)$ and integrate $\mathbf y$ out:
\begin{align}g_t(\mathbf x)\sim  (2\pi)^{-d}\int_{\mathbb R^d}\int_{\mathbb R^d} g(\mathbf x') e^{-t\widetilde{\psi}_{\mathbf x'}(\mathbf k)}e^{i\mathbf k\cdot(\mathbf x-\mathbf x')}d\mathbf kd\mathbf x'.
\end{align}
Also, for further convenience, we have replaced $\mathbf x$ by $\mathbf x'$ in $\widetilde{\psi}_{\mathbf x}(\mathbf k).$

We now approximate $\|g_t\|^2$ by its expectation over the target functions $g$ generated by the Gaussian process\footnote{In general (if the Gaussian process does not have a small correlation length), a sampled value of $\|g_t\|^2$ need not be close to the expectation $\langle\|g_t\|^2\rangle$. However, one can show using the Wick-Isserlis formula that the variance $\langle(\|g_t\|^2-\langle\|g_t\|^2\rangle)^2\rangle$ scales with $t$ as $t^{-(d+2\beta)/(d+\alpha)}$, i.e. becomes asymptotically negligible compared to $\langle\|g_t\|^2\rangle^2$, which scales as $t^{-2\beta/(d+\alpha)}$. We plan to return to this point in a subsequent publication.}:
\begin{align}
    \|g_t\|^2\approx{}& \langle\|g_t\|^2\rangle\\
    ={}&\int_{\mathbb R^d}\langle g_t^2(\mathbf x)\rangle d\mathbf x\\
    \sim {}& (2\pi)^{-2d}\int_{\mathbb R^{5d}}\langle g(\mathbf x')g(\widetilde{\mathbf x}')\rangle
    e^{-t\widetilde{\psi}_{\mathbf x'}(\mathbf k)}e^{i\mathbf k\cdot(\mathbf x-\mathbf x')}
    e^{-t\widetilde{\psi}_{\widetilde{\mathbf x}'}(\widetilde{\mathbf k})}e^{-i\widetilde{\mathbf k}\cdot(\mathbf x-\widetilde{\mathbf x}')}
    d\mathbf kd\mathbf x'd\widetilde{\mathbf k}d\widetilde{\mathbf x}'d\mathbf x.
\end{align}
We integrate out $\mathbf x$ using the identity $\int_{\mathbb R^d} e^{i(\mathbf k-\widetilde{\mathbf k})\cdot\mathbf x}d\mathbf x=(2\pi)^d\delta(\mathbf k-\widetilde{\mathbf k}):$
\begin{align}
    \|g_t\|^2
    \sim {}& (2\pi)^{-d}\int_{\mathbb R^{3d}}\Sigma(\mathbf x',\widetilde{\mathbf x}')
    e^{-t\widetilde{\psi}_{\mathbf x'}(\mathbf k)-t\widetilde{\psi}_{\widetilde{\mathbf x}'}({\mathbf k})} e^{i\mathbf k\cdot(\widetilde{\mathbf x}'-\mathbf x')}
    d\mathbf kd\mathbf x'd\widetilde{\mathbf x}'.
\end{align}
We isolate now the singularity and apply the stationary phase method, obtaining the high-frequency approximation 
$$\int _{\mathbb R^d}\Sigma(\mathbf x',\widetilde{\mathbf x}')e^{-t\widetilde{\psi}_{\mathbf x'}(\mathbf k)-t\widetilde{\psi}_{\widetilde{\mathbf x}'}({\mathbf k})}e^{i\mathbf k\cdot(\widetilde{\mathbf x}'-\mathbf x')} d\widetilde{\mathbf x}'\sim \widetilde{\zeta}_{\mathbf x'}(\mathbf k)e^{-2t\widetilde{\psi}_{\mathbf x'}(\mathbf k)},\quad |\mathbf k|\gg 1.$$
This leads to
\begin{align}
    \|g_t\|^2
    \sim {}& (2\pi)^{-d}\int_{\mathbb R^{2d}}\widetilde{\zeta}_{\mathbf x'}(\mathbf k)e^{-2t\widetilde{\psi}_{\mathbf x'}(\mathbf k)}
    d\mathbf kd\mathbf x'.
\end{align}
To analyze the asymptotic of this integral at large $t$, we represent $\mathbf k$ as $|\mathbf k|\mathbf n,$ where $\mathbf n$ is a unit vector. Then, using the large-$\mathbf k$ asymptotics $\widetilde{\zeta}_{\mathbf x'}(\mathbf k)=\widetilde{\zeta}_{\mathbf x'}(\mathbf n)|\mathbf k|^{-(d+\beta)}$ and $\widetilde{\psi}_{\mathbf x'}(\mathbf k)=\widetilde{\psi}_{\mathbf x'}(\mathbf n)|\mathbf k|^{-(d+\alpha)},$
\begin{align*}
    \|g_t\|^2
    \sim {}& (2\pi)^{-d}\int_{\mathbb R^{d}}\int_{|\mathbf n|=1}\int_{0}^\infty \widetilde{\zeta}_{\mathbf x'}(\mathbf n)r^{-(d+\beta)}e^{-2t\widetilde{\psi}_{\mathbf x'}(\mathbf n)r^{-(d+\alpha)}}
    r^{d-1}d\mathbf x'dS dr\\
    ={}& (2\pi)^{-d}\int_{\mathbb R^{d}}\int_{|\mathbf n|=1} \widetilde{\zeta}_{\mathbf x'}(\mathbf n)\int_{0}^\infty r^{-(1+\beta)}e^{-2t\widetilde{\psi}_{\mathbf x'}(\mathbf n)r^{-(d+\alpha)}}
    d\mathbf x'dS dr\\
    \sim{}&\frac{1}{(2\pi)^d\beta}\Gamma\Big(\frac{\beta}{d+\alpha}+1\Big)\int_{\mathbb R^{d}}\int_{|\mathbf n|=1}\widetilde{\zeta}_{\mathbf x'}(\mathbf n)\widetilde{\psi}^{-\frac{\beta}{d+\alpha}}_{\mathbf x'}(\mathbf n)d\mathbf x'dS\cdot (2t)^{-\frac{\beta}{d+\alpha}}\\
    ={}&\frac{1}{(2\pi)^d\beta}\Gamma\Big(\frac{\beta}{d+\alpha}+1\Big)\int_{\mathbb R^{d}}\int_{|\mathbf n|=1}\widetilde{\zeta}_{\mathbf x}(\mathbf n)(\mu(\mathbf x)\widetilde{\theta}_{\mathbf x}(\mathbf n))^{-\frac{\beta}{d+\alpha}}d\mathbf xdS\cdot  (2t)^{-\frac{\beta}{d+\alpha}},
\end{align*}
which yields the loss asymptotic \eqref{eq:ltgauss}.

In the case when both GP and operator $\mathcal{A}$ originate from the same shallow ReLU network, Fourier transforms of diagonal singularities have similar angular dependence and integration over sphere can be performed analytically.  We know from section \ref{sec:gammax} that if the kernel singularities are based on the angle $\varphi(\mathbf{x},\mathbf{x'})$ between the input points $\mathbf{x},\mathbf{x}'$, then    $\widetilde{\zeta}_\mathbf{x}(\mathbf{z}) \propto \varphi^\beta(\mathbf{x}, \mathbf{x}+\mathbf{z})$ and $\widetilde{\theta}_\mathbf{x}(\mathbf{z}) \propto \varphi^\alpha(\mathbf{x}, \mathbf{x}+\mathbf{z})$. Then the respective Fourier transforms admit the forms $\widetilde{\zeta}(\mathbf{n}) = P(\mathbf{x}) |\mathbf{n}'|^{-d-\beta}$ and $\widetilde{\theta}(\mathbf{n}) = Q(\mathbf{x})  |\mathbf{n}'|^{-d-\alpha}$, with $\mathbf{n}'$ the same as $\mathbf{n}$ except the first dimension: $n'_1=\tfrac{r(\mathbf{x})}{\sigma_b} n_1$. We write the sphere integral as
\begin{equation}\label{eq:sphere_int}
    \begin{split}
        &\int_{|\mathbf n|=1} dS |\mathbf{n}'|^{-d-\beta}\left(|\mathbf{n}'|^{-d-\alpha}\right)^{-\tfrac{\beta}{d+\alpha}} = \int_{|\mathbf n|=1} dS |\mathbf{n}'|^{-d} = \int_{|\mathbf n|=1} dS \left(\frac{r^2}{\sigma_b^2}n_1^2 + (n_2^2+\ldots+n_d^2)\right)^{-\tfrac{d}{2}} \\
        & \stackrel{(1)}{=} \int_0^\pi d\rho (\sin \rho)^{d-2} \int_{|\mathbf{\widetilde{n}}|=1} d\widetilde{S} \left(\frac{r^2}{\sigma_b^2} \cos^2(\rho) + \sin^2(\rho)\right)^{-\tfrac{d}{2}} = S_{d-2} \int_0^\pi d\rho (\sin \rho)^{d-2} \left(\frac{r^2}{\sigma_b^2} \cos^2(\rho) + \sin^2(\rho)\right)^{-\tfrac{d}{2}} \\
        & = S_{d-2} \int_{-\infty}^{+\infty} d (\cot \rho) \left( \frac{r^2}{\sigma_b^2}\cot^2(\rho) + 1\right)^{-\tfrac{d}{2}} = \frac{\sigma_b}{r} S_{d-2} \int_{-\infty}^{+\infty} dz (z^2+1)^{-\tfrac{d}{2}} \stackrel{(2)}{=} \frac{\sigma_b}{r} S_{d-1}
    \end{split}
\end{equation}
Here in (1) we split integration over sphere $|\mathbf{n}|=1$ over the first axis and remaining $d-2$ dimensional sphere $|\mathbf{\widetilde{n}}|=1$: $n_1=\cos\rho$ and $(n_2,\ldots,n_d)=\mathbf{\widetilde{n}} \sin\rho$. Finally in (2) the value of the integral over $z$ equals $S_{d-1}/S_{d-2}$, which can be inferred from the spherically symmetric case $r/\sigma_b=1$. 

\subsection{The coefficient distributions }\label{sec:q}
The derivations given above bypass the explicit computation of the cumulative distribution function $s_n$ for the coefficients $c_n$ of the expansion of $g$ w.r.t. the eigenbasis of the operator $\ta$ (see Eqs.~\eqref{eq:s_partial_sum},\eqref{tail asym}). These can be derived (at least heuristically) using essentially the same approach based on localized approximate eigendecomposition, but this time accompanied by the count of the total contribution of the coefficients corresponding to the given eigenvalue threshold from all the points of the domain.

It is convenient to introduce the partial sum $Q(\lambda)$ of the coefficients $|c_n|^2$ defined as in \eqref{eq:s_partial_sum} but expressed in terms of the eigenvalue threshold $\lambda$:
\begin{equation}\label{eq:ql0}Q(\lambda)=\sum_{n:\lambda_n<\lambda}|c_n|^2.\end{equation}
The large-$n$ asymptotic of $s_n$ corresponds to the small-$\lambda$ asymptotic of $Q(\lambda)$.

For Scenario 1 (discontinuous $g$), the resulting expression is 
\begin{equation}\label{eq:qldisc}
Q(\lambda)\sim \frac{1}{\pi}\int_{\partial\Omega}|\Delta g(\mathbf x)|^2 (\mu(\mathbf x)\widetilde{\theta}_{\mathbf x}(\mathbf n))^{-\frac{1}{d+\alpha}}dS\cdot\lambda^{\tfrac{1}{d+\alpha}}.\end{equation}

For Scenario 2 (Gaussian $g$), the resulting expression is 
\begin{equation}\label{eq:qlgauss}
Q(\lambda)\sim \frac{1}{(2\pi)^d\beta}\int_{\mathbb R^{d}}\int_{|\mathbf n|=1}\widetilde{\zeta}_{\mathbf x}(\mathbf n)(\mu(\mathbf x)\widetilde{\theta}_{\mathbf x}(\mathbf n))^{-\frac{\beta}{d+\alpha}}d\mathbf xdS\cdot\lambda^{\tfrac{\beta}{d+\alpha}}.\end{equation}

Since we have already found the loss asymptotics for both scenarios (Eqs.~\eqref{eq:ltdisc},\eqref{eq:ltgauss}), we can establish the above expressions by showing that in either case $Q(\lambda)\sim a \lambda^b$ with some specific exponent $b$; the coefficient $a$ can then be deduced from the respective loss coefficient. 

To find the exponent $b$, we consider again the STFT representation \eqref{eq:fyk},\eqref{eq:stft2}. Suppose that the function $\omega$ lives on a small scale $\tfrac{1}{M}$:
$$\omega(\mathbf x)=M^{d/2}\omega_0(M\mathbf x),$$
and suppose that the domain is accordingly decomposed into $\propto M^d$ independent  ``$\mathbf y$--cells''. We can think of the respective STFT coefficients $F(\mathbf y,\mathbf k)$ as representing the actual coefficients in the eigenvector expansion. Consider now separately the two scenarios.

\paragraph{Scenario 1: a discontinuous $g$.}
The coefficients $F(\mathbf y,\mathbf k)$ are negligible for cells not intersecting the boundary $\partial \Omega$. Suppose that the cell intersects $\partial\Omega$. Then the coefficients $F(\mathbf y,\mathbf k)$ in this cell vanish outside the line $\mathbf k=u\mathbf n$ in the $\mathbf k$-space, where $\mathbf n$ is the unit normal to $\partial\Omega$. For $\mathbf k=u\mathbf n,$ we have 
\begin{equation*}
|F(\mathbf y, \mathbf k)|\propto M^{d/2}M^{1-d}\tfrac{|\Delta g(\mathbf y)|}{|\mathbf k|}=M^{1-d/2}\tfrac{|\Delta g(\mathbf y)|}{|\mathbf k|}.
\end{equation*}
We assume now that for suitable discrete wave numbers $\mathbf k$ the coefficients $F(\mathbf y,\mathbf k)$ are associated to respective approximate eigenvectors of $\widetilde{\mathcal A}$ in the $\mathbf y$-cell, and  estimate the respective contribution of the coefficients to the sum $S(\lambda)$. The discreteness results from the finite size of the support of $\omega$: the density of the eigenvalues scales with $M$ as $M^{-d}$. The respective discrete constants $u$ for the relation $\mathbf k=u\mathbf n$ scale as integer multiples of $M$, i.e. $u_l\sim lM$. Accordingly, the   contribution of the coefficients $F(\mathbf y,\mathbf k)$ in the $\mathbf y$-cell to $Q(\lambda)$ can be estimated as 
\begin{align}\label{eq:sumfyk}
\sum_{\mathbf k:\mathbf k=u_l\mathbf n,\lambda_{\mathbf k}<\lambda}|F(\mathbf y,\mathbf k)|^2\sim{}& M^{-1}\int_{k_0}^\infty M^{2-d}\tfrac{|\Delta g|^2}{k^2}dk\nonumber\\
\sim{}& M^{1-d}\tfrac{|\Delta g|^2}{k_0},
\end{align}
where the wave number $k_0=|u\mathbf n|$ corresponds to the eigenvalue $\lambda$:
$$\widetilde\theta_{\mathbf x}(u\mathbf n)=\lambda.$$
We can find $k_0$ using the homogeneity of $\widetilde\theta_\mathbf x$: since $\widetilde\theta_{\mathbf x}(u\mathbf n)=|u|^{-d-\alpha}\widetilde\theta_{\mathbf x}(\mathbf n),$ we have
$$k_0=|u|=(\tfrac{\widetilde\theta_{\mathbf x}(\mathbf n)}{\lambda})^{1/(d+\alpha)}.$$
Substituting this into Eq.~\eqref{eq:sumfyk} and taking into account that there are $\propto M^{d-1}$ cells intersecting $\partial\Omega$, we find Eq.~\eqref{eq:qldisc} up to a coefficient.

\paragraph{Scenario 2: $g$ generated by a Gaussian process.}
As before, finding the exponent $b$  can be reduced to estimating the asymptotic of $|F(\mathbf y,\mathbf k)|^2$ at a fixed $\mathbf y$ and large $\mathbf k$. Computing the expectation, we get
\begin{align*}\langle |F(\mathbf y,\mathbf k)|^2\rangle={}&(2\pi)^{-d}\int_{\mathbb R^d}\int_{\mathbb R^d} \omega(\mathbf x-\mathbf y)\omega(\mathbf x'-\mathbf y)\\
&\times\Sigma(\mathbf x,\mathbf x')e^{-i\mathbf k\cdot(\mathbf x-\mathbf x')}d\mathbf x d\mathbf x'\\
\propto{}& |\mathbf k|^{-(d+\beta)}.
\end{align*}
Since the wave vector $\mathbf k$ corresponds to an eigenvalue $\lambda\propto |\mathbf k|^{-(d+\alpha)}$ and since the density of the wave numbers $\mathbf k\in\mathbb R^d$ associated with localized eigenvectors of $\widetilde{\mathcal A}$ scales as $M^{-d}$, we can write
\begin{align}
\sum_{\mathbf k:\lambda_{\mathbf k}<\lambda}|F(\mathbf y,\mathbf k)|^2\propto{}& M^{-d}\int_{|\mathbf k|<\lambda^{-1/(d+\alpha)}} |\mathbf k|^{-(d+\beta)}d\mathbf k\nonumber\\
\propto{}& M^{-d}\lambda^{\beta/(d+\alpha)}.\nonumber
\end{align}
Collecting the contributions to $Q(\lambda)$ from all $\sim M^d$ $\mathbf y$-cells, we thus get $$Q(\lambda)\sim a\lambda^{\beta/(d+\alpha)},$$ as claimed.

By expressing $\lambda$ through $n$ with the help of the eigenvalue asymptotic \eqref{lambda asym} and Eq.~\eqref{eq:nu1ad}, we can also cast the obtained formulas for $Q(\lambda)$ in the form $s_n\sim Kn^{-\kappa}$ as in Eq.~\eqref{tail asym}.

For Scenario 1 (discontinuous $g$), the resulting coefficient and exponent are 
\begin{align*}\kappa={}&\tfrac{1}{d},\\
K={}&\Big(\tfrac{1}{\pi}\int_{\partial \Omega}|\Delta g(\mathbf x)|^2 (\mu(\mathbf x)\widetilde{\theta}_{\mathbf x}(\mathbf n))^{-\frac{1}{d+\alpha}}dS\Big)\Big(\int \gamma_\mathbf x\mu^{\frac{d}{d+\alpha}}(\mathbf x)d\mathbf x\Big)^{1/d}.\end{align*}

For Scenario 2 (Gaussian $g$), the resulting coefficient and exponent are 
\begin{align*}\kappa={}&\tfrac{1}{d},\\
K={}&\Big(\tfrac{1}{(2\pi)^d\beta}\int_{\mathbb R^{d}}\int_{|\mathbf n|=1}\widetilde{\zeta}_{\mathbf x}(\mathbf n)(\mu(\mathbf x)\widetilde{\theta}_{\mathbf x}(\mathbf n))^{-\frac{\beta}{d+\alpha}}d\mathbf xdS\Big)\Big(\int \gamma_\mathbf x\mu^{\frac{d}{d+\alpha}}(\mathbf x)d\mathbf x\Big)^{1/d}.\end{align*}

\newpage
\section{Extensions}\label{sec:SM_extension}
In this section we derive results of section \ref{sec:extensions} in the paper.

\subsection{Activations of different smoothness}
We consider a shallow network in NTK regime with activation function $\phi_q(z)=(z)_+^{\;q}, \; q>0$. Output covariance $\Sigma_q$ and NTK $\Theta_q$ for such network can be written as 
\begin{align}
    &\Sigma_q(\mathbf{x},\mathbf{x}') = \sigma_w^2 \Big\langle \big(z(\mathbf{x})\big)_+^{\;q} \big(z(\mathbf{x}')\big)_+^{\;q} \Big\rangle \\
    &\label{eq:NTK_q}
    \Theta_q(\mathbf{x},\mathbf{x}') = \Sigma_q(\mathbf{x},\mathbf{x}') + \sigma_w^2 (\sigma_w^2 \mathbf{x}\cdot \mathbf{x'}+\sigma_b^2) q^2 \Big\langle \big(z(\mathbf{x})\big)_+^{\;q-1} \big(z(\mathbf{x}')\big)_+^{\;q-1} \Big\rangle
\end{align}
Here the average is taken w.r.t. pair of Gaussian random variables $z(\mathbf{x}),z(\mathbf{x}')$ with zero mean and covariance
\begin{equation}
    \big\langle (z(\mathbf{x}),z(\mathbf{x}')^T (z(\mathbf{x}),z(\mathbf{x}')\big\rangle = 
    \begin{pmatrix}
    r^2(\mathbf{x}) & r(\mathbf{x}) r(\mathbf{x}') \varphi(\mathbf{x}, \mathbf{x'})\\
    r(\mathbf{x}) r(\mathbf{x}') \varphi(\mathbf{x}, \mathbf{x'}) & r^2(\mathbf{x}')
    \end{pmatrix}
\end{equation}
Such averages were calculated in \cite{NIPS2009_kernel} for integer $q$, but we take intermediate integral representation (eqs. (3), (16)) from this paper, which we will analyze for general $q$. As usual, we omit explicit $\mathbf{x},\mathbf{x}'$ dependence for brevity.
\begin{equation}\label{eq:av_q}
    \big\langle (z)_+^{\;q} (z')_+^{\;q} \big\rangle = \frac{1}{2\pi} r^q r'^q \Gamma(q+1) \big(\sin\varphi\big)^{2q+1}\int\limits_0^{\tfrac{\pi}{2}} \frac{\big(\cos \psi\big)^q}{\big(1-\cos\varphi\cos\psi\big)^{q+1}} d\psi
\end{equation}
Let's denote  the integral in \eqref{eq:av_q} by $I_q(\varphi)$. We will transform it so it has the form of integral representation of the hypergeometric function $_2F_1$
\begin{equation}\label{eq:I_q}
\begin{split}
    I_q(\varphi) &= \int\limits_0^{\tfrac{\pi}{2}} \frac{\big(\cos \psi\big)^q}{\big(1-\cos\varphi\cos\psi\big)^{q+1}} d\psi \\
    &=\int\limits_0^1\frac{y^q}{\sqrt{1-y^2}(1-\cos\varphi y)^{q+1}}dy, \quad y=\cos\psi \\
    &=\frac{1}{(1-\cos\varphi)^{q+1}}\int\limits_0^1 t^q (1-t)^{-\tfrac{1}{2}}\left(1+\frac{1+\cos\varphi}{1-\cos\varphi}t\right)^{-\tfrac{1}{2}}dt, \quad t = \frac{y(1-\cos\varphi)}{1-\cos\varphi y}
\end{split}
\end{equation}
The hypergeometric function $_2F_1(a,b;c;z)$ has the following integral representation and asymptotic expansion at $z=-\infty$:
\begin{align}
    &\label{eq:2F1_int_rep}
    _2F_1(a,b;c;z) = \frac{\Gamma(c)}{\Gamma(b)\Gamma(c-b)} \int\limits_0^1 t^{b-1}(1-t)^{c-b-1}(1-tz)^{-a}dt \\
    &\label{eq:2F1_asym}
    _2F_1(a,b;c;-z) = z^{-b}\frac{\Gamma(a-b)\Gamma(c)}{\Gamma(a)\Gamma(-b+c)}\left(1+\sum_{n \geq 1}g_n z^{-n}\right) +z^{-a} \sum_{n \geq 0} f_n z^{-n} 
\end{align}
Here $g_n$ are $f_n$ are coefficients in the asymptotic expansion. Comparing our integral $I_q$ with integral representation of $_2F_1$ we see that it is indeed hypergeometric function with parameters $a=\tfrac{1}{2},b=q+1,c=q+\tfrac{3}{2}$ and argument $z= -\tfrac{1+\cos\varphi}{1-\cos\varphi}$. Singularity at $\varphi=0$ is located at hypergeometric function argument $z=-\infty$, therefore we need exactly asymptotic \eqref{eq:2F1_asym} to analyze singularity. Substituting our values of $_2F_1$ parameters we obtain the following asymptotic expansion at $\varphi=0$
\begin{equation}\label{eq:I_q_asym}
\begin{split}
    (\sin \varphi)^{2q+1} I_q(\varphi) &= \frac{\Gamma(q+1)\Gamma(-\tfrac{1}{2}-q)}{\sqrt{\pi}}\frac{(\sin \varphi)^{2q+1}}{(1+\cos\varphi)^{q+1}}\left[1+\sum_{n \geq 1}g_n \Big(\frac{1-\cos\varphi}{1+\cos\varphi}\Big)^{n}\right] \\
    &+ \frac{(\sin \varphi)^{2q+1}}{(1-\cos\varphi)^{q+\tfrac{1}{2}}(1+\cos\varphi)^{\tfrac{1}{2}}} \sum_{n \geq 0}f'_n \Big(\frac{1-\cos\varphi}{1+\cos\varphi}\Big)^{n}
\end{split}
\end{equation}
As it is written now, the asymptotic expansion above is not an expansion in powers $\varphi$, but it can be turned into one by replacing functions of $\varphi$ with their Taylor expansions. In particular, $\sin\varphi = \varphi+O(\varphi^3)$, $1-\cos\varphi=\tfrac{1}{2}\varphi^2 + O(\varphi^4)$ and $1+\cos\varphi = 2 + O(\varphi^2)$. In the asymptotic expansion \eqref{eq:I_q_asym} the second term starting from $\varphi^0$ is the leading one. However, it contains only even powers $\varphi^{2n}$, which are all regular. On the contrary, the first term starts with $\varphi^{2q+1}$ and it is singular for all $q$ except half-integers. Taking the leading singular term from \eqref{eq:I_q_asym} we obtain the leading singular term of NTK \eqref{eq:NTK_q}
\begin{equation}\label{eq:Theta_q_sing}
    \Theta_{q,\text{sing}} = \frac{\sigma_w^2}{2\pi} r^{2q} q^2 \frac{\Gamma^2(q)\Gamma(\tfrac{1}{2}-q)}{\sqrt{\pi}2^q} \varphi^{2q-1}
\end{equation}
Combining this with the $\gamma$ coefficient from \eqref{eq:gamma_alpha2} with $\alpha=2q-1$ and $A(\mathbf{x})$ deduced from \eqref{eq:Theta_q_sing} we get eigenvalue asymptotic coefficient $\Lambda_q$
\begin{equation}\label{eq:Lambda_q}
\Lambda_q ={} \sigma_w^{\alpha+2} \sigma_b^{\tfrac{\alpha}{d}} q^2 (2\pi)^{d+q-2} \frac{\Gamma\big(\tfrac{d+\alpha}{2}\big) \Gamma^2(\tfrac{\alpha+1}{2})}{\Big(\Gamma(\tfrac{d}{2}+1)\Big)^{\frac{d+\alpha}{d}}} \left\langle \mu(\mathbf{x})^{-\tfrac{\alpha}{d+\alpha}}r(\mathbf{x})^{\frac{2d-\alpha d -\alpha}{d+\alpha}}\right\rangle^{\frac{d+\alpha}{d}}_\mu
\end{equation}
In the case of half-integer $q$ the coefficient in \eqref{eq:Theta_q_sing} diverges due to gamma function $\Gamma(\tfrac{1}{2}-q)$ having simple poles at positive half integer $q$. Quite interestingly, the same delta function is found in $\gamma_{d,\alpha}^{(d+\alpha)/d}$ and they cancel. Therefore, the final constant $\Lambda_q$ formally has a meaningful limit at half integer $q$. However, existence of a limit does not prove that at half-integer $q$ eigenvalues have an asymptotic with constant \eqref{eq:Lambda_q}. The half integer case should be studied separately and we leave it for the future work.

\subsection{Deep networks}
We consider deep network $L>2$ in the NTK regime and with ReLU activation function. Covariances and NTK's of intermediate layers are calculated as

\begin{equation}\label{eq:recur}
    \begin{cases}
    & \Sigma^{(l+1)}(\mathbf{x},\mathbf{x}') = \sigma_w^2 \big\langle \phi(z^l(\mathbf{x}))\phi(z^{l}(\mathbf{x}')\big\rangle + \sigma_b^2\\
    &
    \Theta^{(l+1)}(\mathbf{x},\mathbf{x}') = \Sigma^{(l+1)}(\mathbf{x},\mathbf{x}') +\sigma_w^2 \Theta^{(l)}(\mathbf{x},\mathbf{x}')\big\langle \dot{\phi}(z^l(\mathbf{x}))\dot{\phi}(z^l(\mathbf{x}'))\big\rangle
    \end{cases}
\end{equation}

Here, as in the paper, $z^l(\mathbf{x})$ is a GP with covariance $\Sigma_l(\mathbf{x},\mathbf{x'})$. Parametrizing covariance as 
\begin{equation}\label{eq:covariance_param}
    \Sigma^{(l)}(\mathbf{x},\mathbf{x}') = 
    \begin{pmatrix}
    &r_l(\mathbf{x})^2 & r_l(\mathbf{x}) r_l(\mathbf{x}') \cos \varphi_l(\mathbf{x},\mathbf{x}')\\
    &r_l(\mathbf{x}) r_l(\mathbf{x})' \cos \varphi_l & r_l(\mathbf{x}')^2
    \end{pmatrix}
\end{equation}
From this point we again drop $\mathbf{x},\mathbf{x}'$ dependence. Using parametrization \eqref{eq:covariance_param} we rewrite recursive relations \eqref{eq:recur}
\begin{equation}\label{eq:recur_2}
        \begin{cases}
    & \Sigma^{(l+1)} = \frac{\sigma_w^2}{2\pi} r_l r_l' \big(\sin \varphi_l +\cos \varphi_l (\pi-\varphi_l)\big) + \sigma_b^2\\
    &
    \Theta^{(l+1)} = \Sigma^{(l+1)} + \Theta^{(l)}\frac{\sigma_w^2}{2\pi} (\pi-\varphi_l)
    \end{cases}
\end{equation}

We see that NTK's $\Theta^{(l)}$ can be fully expressed through $\varphi_l$ and $r_l$. From \eqref{eq:recur_2} the recursive relations for $\varphi_l, r_l$ are  

\begin{equation}\label{eq:recur_3}
        \begin{cases}
    & r_{l+1}^2 = \frac{\sigma_w^2}{2} r_l^2 + \sigma_b^2\\
    & \cos \varphi_{l+1} = \frac{1}{r_{l+1}r_{l+1}'} \Big[\frac{\sigma_w^2}{2\pi} r_l r_l' \big(\sin \varphi_l +\cos \varphi_l (\pi-\varphi_l)\big) + \sigma_b^2\Big]
    \end{cases}
\end{equation}

From these equations we see that $\varphi_{l+1} = 0$ only when $\varphi_l=0$ and $R_l=r_l'$. This, in turn, happens only when $\mathbf{x}=\mathbf{x}'$. Using starting values $r_1^2(\mathbf{x})=\sigma_w^2 |\mathbf{x}|^2 +\sigma_b^2$ and $\varphi(\mathbf{x},\mathbf{x}')$ defined in \eqref{eq:phi1_def}, and the fact that $\arccos(z)$ is smooth everywhere except $z=-1,1$, one can see that $r_l(\mathbf{x})$ is smooth everywhere and $\varphi(\mathbf{x},\mathbf{x}')$ is smooth everywhere except the diagonal $\mathbf{x}=\mathbf{x}'$. Therefore, NTKs $\Theta^{(L)}$ are smooth away from diagonal and might have a singularity on it. From \eqref{eq:recur_2} and smoothness of $r_l(\mathbf{x})$ we see that the only source of singularity are $\varphi_l$.

To find singular expression for $\varphi_l$ let's assume that $\varphi_l=O(|\mathbf{x}-\mathbf{x}'|)$ and carefully expand second equation in \eqref{eq:recur_3} up to second order in $|\mathbf{x}-\mathbf{x'}|$. To do this we note that $|r_l-r_l'|=O(|\mathbf{x}-\mathbf{x}'|)$ and $\sin \varphi_l +\cos \varphi_l (\pi-\varphi_l) = \pi(1-\varphi_l^2/2) + O(\varphi_l^3)$
\begin{equation}\label{eq:phi_l_exp}
    \varphi_{l+1}^2 = \frac{\sigma_w^2 r_l^2}{2 r_{l+1}^2}\Big(\varphi_l^2 + (r_l-r_l')^2\frac{\sigma_b^2}{r_{l+1}^2}\Big) + O(|\mathbf{x}-\mathbf{x}'|^3) 
\end{equation}
Thus we confirmed our assumption $\varphi_l=O(|\mathbf{x}-\mathbf{x}'|)$. In the leading order both $(r_l-r_l')^2$ and $\varphi_1^2$ are homogeneous functions of degree 2. Combining this with \eqref{eq:phi_l_exp} we see that all $\varphi_l$ are homogeneous functions of degree 1 in the leading order of $\mathbf{x}-\mathbf{x}'$. 

Now we find the leading singular part of $\Theta^{(l+1)}$. We will see that the leading singular part has degree 1, therefore we assume the following NTK expansion
\begin{equation}
    \Theta^{(l)} = \Theta^{(l)}_{\text{diag}} - \sum\limits_{m=1}^{l-1} a_m^{(l)} \varphi_m + O(|\mathbf{x}-\mathbf{x}|^2)
\end{equation}
Here $\Theta^{(l)}_{\text{diag}}$ is the value of NTK at the diagonal, $a_m^{(l)}$ are constants and the sum represents leading singular part of the NTK $\Theta^{(l)}_{\text{sing}}$. The recursion relation \eqref{eq:recur_2} can be now written as
\begin{equation}
    \Theta^{(l+1)}_{\text{diag}} + \Theta^{(l+1)}_{\text{sing}} + O(|\mathbf{x}-\mathbf{x}|^2)= r_{l+1}^2 + \big(\Theta^{(l)}_{\text{diag}} + \Theta^{(l)}_{\text{sing}}\big) \frac{\sigma_w^2}{2\pi} (\pi-\varphi_l) +O(|\mathbf{x}-\mathbf{x}|^2)
\end{equation}
From this we extract recursive relations for diagonal and singular parts of NTK
\begin{equation}\label{eq:NTK_diag_sing_recur}
    \begin{split}
    & \Theta^{(l+1)}_{\text{diag}} = r_{l+1}^2 + \frac{\sigma_w^2}{2} \Theta^{(l)}_{\text{diag}} \\
    & \Theta^{(l+1)}_{\text{sing}} = -\frac{1}{2\pi}\Theta^{(l)}_{\text{diag}} \varphi_l + \frac{\sigma_w^2}{2} \Theta^{(l)}_{\text{sing}}
    \end{split}
\end{equation}
Constants $a_m^{(l)}$ can be explicitly extracted from this relations. Since $\Theta_{\text{diag}}^{(l)}>0$ we can see that all $a_m^{(l)}>0$. It means that the leading singular terms of order $O(|\mathbf{x}-\mathbf{x}|)$ will not cancel each over, thus confirming that the leading singularity in NTK has homogeneity degree 1.

\subsection{MF regime}
The NTK of network in MF regime is given by 
\begin{equation}
    \label{eq:MF_NTK}
    \Theta(\mathbf{x},\mathbf{x'}) = \int  \nabla_{\tilde{\mathbf{w}}}\tilde{\phi}(\tilde{\mathbf{w}}, \mathbf{x}) p(\tilde{\mathbf{w}}) \nabla_{\tilde{\mathbf{w}}}\tilde{\phi}(\tilde{\mathbf{w}}, \mathbf{x}')d\tilde{\mathbf{w}}
\end{equation}

Here $\tilde{\phi}(\tilde{\mathbf{w}},\mathbf{x})=c \phi(\mathbf{w}\cdot \mathbf{x}+b)$ is a computation of a single neuron in shallow network and  $p(\tilde{\mathbf{w}})=p(c,\mathbf{w},b)$ is a distribution of a neuron parameters. In the case of ReLU activation $\phi(z)=(z)_+$ we rewrite it as 
\begin{equation}\label{eq:MF_NTK_relu}
\begin{split}
        \Theta(\mathbf{x},\mathbf{x'}) &= \int (\mathbf{w}\cdot\mathbf{x}+b)_+ (\mathbf{w}\cdot\mathbf{x}'+b)_+ dp_{0}(\mathbf{w},b) \\
        &+(1+\mathbf{x}\cdot\mathbf{x'}) \int H(\mathbf{w}\cdot\mathbf{x}+b) H(\mathbf{w}\cdot\mathbf{x}'+b) dp_{2}(\mathbf{w},b) \\
        &= I_1(\mathbf{x},\mathbf{x'}) + (1+\mathbf{x}\cdot\mathbf{x'})I_2(\mathbf{x},\mathbf{x'})
\end{split}
\end{equation}
Here the first integral $I_1$ corresponds to taking the gradient w.r.t. $c$ in \eqref{eq:MF_NTK} and the second integral $I_2$ corresponds to taking the gradients w.r.t. $\mathbf{w}$ and $b$; $p_{0}(\mathbf{w},b)$ and $p_{2}(\mathbf{w},b)$ are the $0$'th and $2$'nd moment of the distribution $p(c,\mathbf{w},b)$ w.r.t. the variable $c$; $H(z)$ is the Heaviside step function.

Now suppose that $p_{0}$ and $p_{2}$ are sufficiently smooth and fall off quickly at infinity. To analyze the smoothness of the NTK \eqref{eq:MF_NTK_relu} let's differentiate it using that $\tfrac{d}{dz}(z)_+ = H(z)$ and $\tfrac{d}{dz}H(z) = \delta(z)$. We start with the second integral $I_2$. The first derivative produces one delta function, and, together with the left Heaviside function, they are located on hyperplanes $\mathbf{w}\cdot\mathbf{x}+b=0$ and $\mathbf{w}\cdot\mathbf{x}'+b=0$ in the $(\mathbf{w},b)$ space. First we consider a neighborhood of points $\mathbf{x} \ne \mathbf{x}'$ - the corresponding integral continuously depend on $\mathbf{x},\mathbf{x}'$. If we differentiate the second time we will have two delta function, which restrict the integral to $d-1$ dimensional subspace of $(\mathbf{w},b)$ space, which is also continuously depend on $\mathbf{x},\mathbf{x}'$. Further derivatives can be translated to differentiating $p_2(\mathbf{w},b)$, with the result being continuous as long as $p_2$ is sufficiently smooth. Thus we established that the second integral is smooth at $\mathbf{x}\ne\mathbf{x'}$. Now let's turn to the diagonal $\mathbf{x}=\mathbf{x'}$, where two hyperplanes coincide, and consider the first derivative. Without loss of generality assume that the derivative is over $\mathbf{x}'$, then it is discontinuous, because infinitely small change of $\mathbf{x}$ will change which half of hyperplane $\mathbf{w}\cdot\mathbf{x}'+b=0$ (corresponding to delta function) is located in the halfspace "allowed" by the Heaviside function: $\mathbf{w}\cdot\mathbf{x}+b>0$. To summarize, $I_2(\mathbf{x},\mathbf{x'})$ is smooth outside of the diagonal $\mathbf{x}=\mathbf{x}'$ and has a first order singularity on it. The first integral $I_1$ is treated similarly, except that one has to differentiate 3 times instead of 1.             

Since the order of singularity is higher in $I_1$, the leading singular term comes from $I_2$. We focus now on deriving its behavior near the diagonal. Consider $\mathbf{x'}=\mathbf{x}+a \hat{\mathbf{n}}$ with small $a>0$ and unit vector $\hat{\mathbf{n}}$. We write $I_2$ as
\begin{equation}\label{eq:I2}
    I_2(\mathbf{x},\mathbf{x'}) = \int\limits_{\mathbf{w}\cdot\mathbf{x}+b>0} p_2(\mathbf{w},b) d\mathbf{w}db \quad-\quad \int\limits_{0<\mathbf{w}\cdot\mathbf{x}+b<-a\mathbf{w}\cdot\hat{\mathbf{n}}} p_2(\mathbf{w},b) d\mathbf{w}db
\end{equation}
The first integral here is the values of $I_2$ on the diagonal, and the second integral is singular, because, e.g., it doesn't change sing with $\hat{\mathbf{n}}\rightarrow-\hat{\mathbf{n}}$ due to being non-negative. We calculate the second integral in \eqref{eq:I2} up to the first order in $|\mathbf{x}-\mathbf{x}'|=a$. The integration is taken in the thin region adjacent the the half of the hyperplane $\mathbf{w}\cdot\mathbf{x}+b=0$ specified by $-\mathbf{w}\cdot\hat{\mathbf{n}}>0$. The thickness of this region at point $\mathbf{w}$ can be calculated using geometric reasoning. The answer is $a(-\mathbf{w}\cdot\hat{\mathbf{n}})_+ / \sqrt{|\mathbf{x}|^2+1}$. This gives us
\begin{equation}\label{eq:I2_mixed}
\begin{split}
    &I_2(\mathbf{x},\mathbf{x'})-I_2(\mathbf{x},\mathbf{x}) = -\frac{a}{\sqrt{|\mathbf{x}|^2+1}}\int\limits_{\mathbf{w}\cdot\mathbf{x}+b=0} (-\mathbf{w}\cdot\hat{\mathbf{n}})_+ p_{2}(\mathbf{w},b) d\mathbf{w}db \; + O(|\mathbf{x}-\mathbf{x'}|^2)
\end{split}
\end{equation}
However, the expression \eqref{eq:I2_mixed} in principle contains both regular and singular parts. We can extract the singular part using the fact that it doesn't change sign under $\hat{\mathbf{n}}\rightarrow-\hat{\mathbf{n}}$, while the regular part does. Since the leading singular part of NTK comes from $I_2$, we can write it as
\begin{equation}\label{eq:Theta_sing}
    \Theta_{\text{sing}}(\mathbf{x},\mathbf{x}') = \frac{\sqrt{|\mathbf{x}|^2+1} }{2}\int\limits_{\mathbf{w}\cdot\mathbf{x}+b=0}  \big|\mathbf{w}\cdot(\mathbf{x}-\mathbf{x'})\big|p_{2}(\mathbf{w},b) d\mathbf{w}db.
\end{equation}
We see that the singularity is of homogeneous type with degree 1, as for the network in NTK regime.  

\newpage
\section{Additional experiments}

\begin{figure}[thb]
    \centering
    \includegraphics[width=0.95\textwidth, clip, trim= 0mm 0mm 0 0mm]{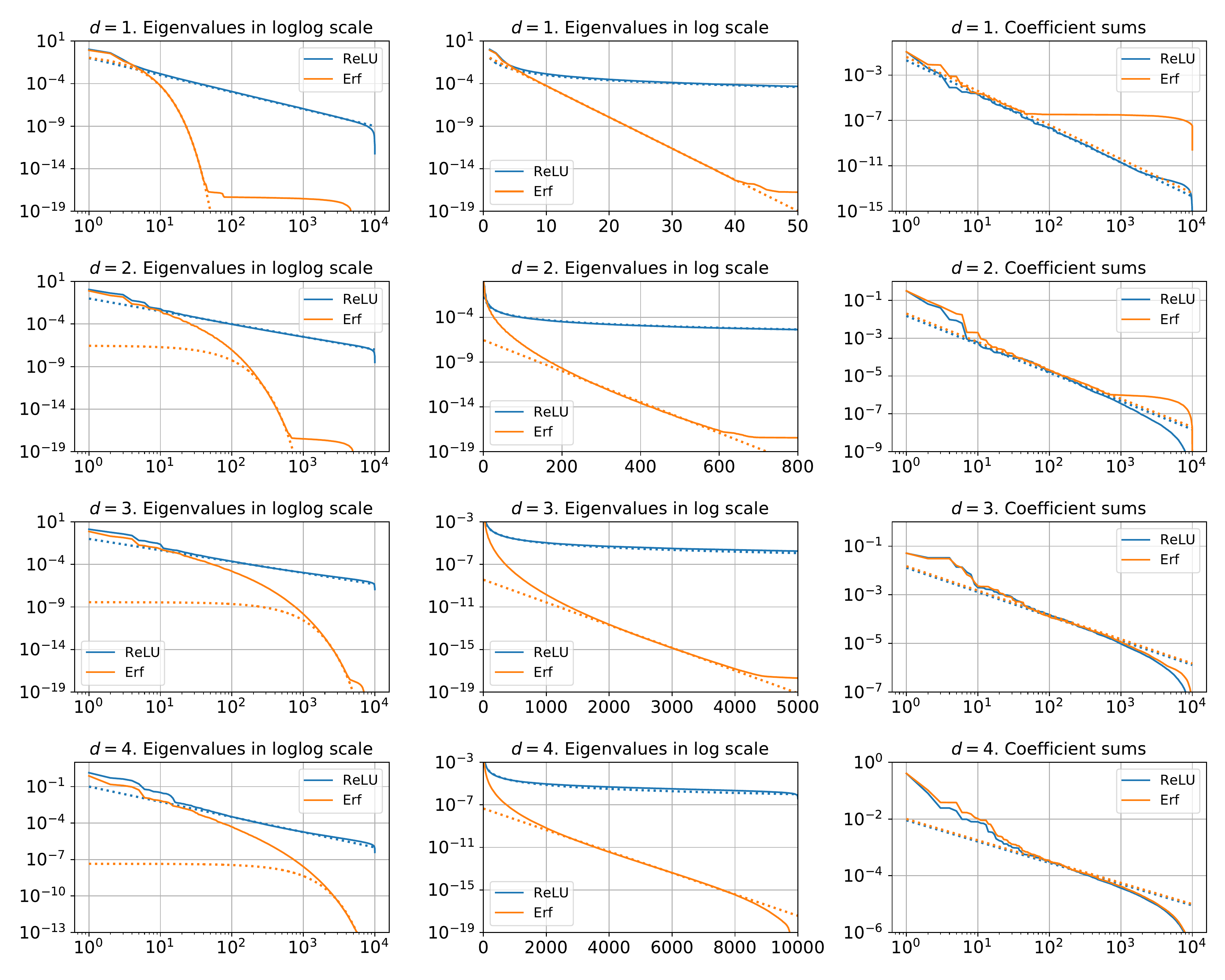}
    \caption{Distribution of eigenvalues $\lambda_n$ and coefficient partial sums $s_n$ for activation functions $\phi(z)=(z)_+$ and $\phi(z)=\operatorname{Erf}(z)$. Target functions in both cases are draws from Gaussian Process modeled by shallow network with NTK parametrization. Dotted lines show analytic expressions fitting the experimental distributions. Eigenvalues for the ReLU activation are fitted with the power law $\lambda_n = \Lambda n^{-1-\tfrac{1}{d}}$, eigenvalues for the Erf activation are fitted with the exponential law $\lambda_n \sim \Lambda e^{-a n}$, and coefficients are fitted with the power law $s_n=Kn^{-\tfrac{3}{d}}$.} 
  \label{fig:activations}
\end{figure}

\begin{figure}[thb]
    \centering
    \includegraphics[width=0.95\textwidth, clip, trim= 0mm 0mm 0 0mm]{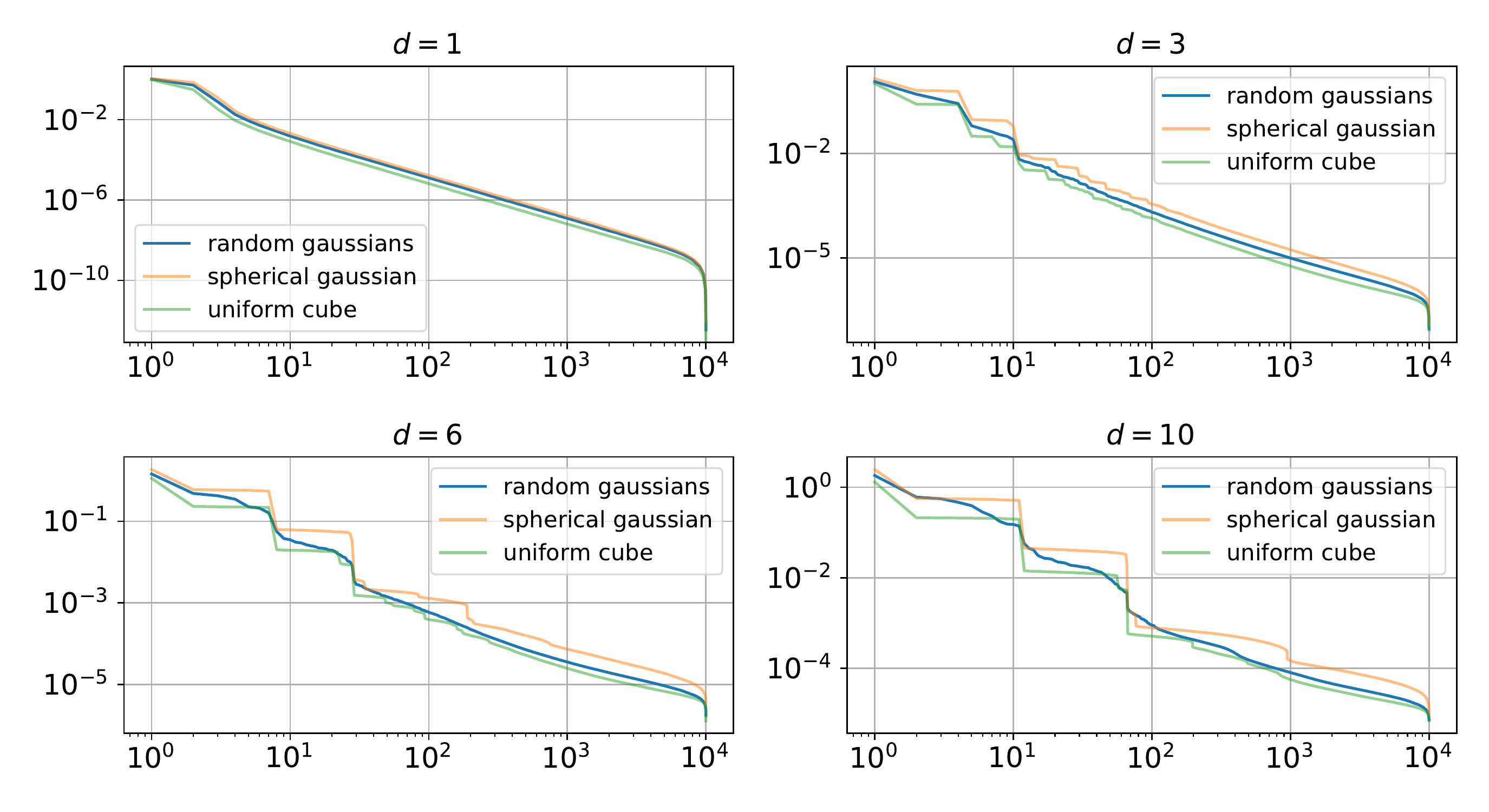}
    \caption{Eigenvalue distribution for different data distribution $\mu(\mathbf{x})$} 
  \label{fig:distributions}
\end{figure}

In the paper we considered only discontinuous activation functions $\phi(z)=(z)_+^{\;q}$. The exponent in power law asymptotic in this case depends on activation smoothness $q$ as $\nu=1+\tfrac{2q-1}{d}$, which means that smoother activations produce NTKs with more quickly decreasing eigenvalues. The natural question is what would be the asymptotic of eigenvalues for smooth activation, although our theory does not apply to such activations. As an example of smooth activation we consider the error function $\phi(z) = \operatorname{Erf}(z)$. The NTK of shallow network with the error function activation is calculated in \cite{NEURIPS2019_0d1a9651}, and it is smooth everywhere. 

In Figure \ref{fig:activations} we can see eigenvalues $\lambda_n$ and coefficient partial sums $s_n$ for NTKs based on the ReLU and Erf activations. We see that the eigenvalues in the Erf case decrease much faster and can be approximately fitted by an exponential law $\lambda_n \sim \Lambda e^{-a n}$. Quite interestingly, for both activations the coefficient distributions $s_n$ behave almost identically, which suggests that the eigenvectors are asymptotically represented by highly oscillating functions regardless of NTK type.         

The second experiment is about data distribution. As we mentioned in Section \ref{sec:exp_det}, the use of symmetric distributions $\mu(\mathbf{x})$ makes the evolution operator $\ta$ also symmetric. In Figure \ref{fig:distributions} we illustrate this point by plotting eigenvalue distribution for normal Gaussian distribution, uniform distribution on $[-1,1]^d$ and average of randomly chosen Gaussians as described in Section \ref{sec:exp_det}. We see that in the case of symmetric data distribution, the eigenvalue distribution has a staircase-like shape, especially for higher dimensions. This is explained by the high degeneracy of  the eigenvalues of symmetric operators. However, for distribution $\mu(\mathbf{x})$ made of randomly chosen Gaussians the staircase-like shape is significantly smoothed, which indicates that such data distribution sufficiently eliminates all the symmetry-based features of corresponding linear operator.  

\end{document}